\newtheorem{definition}{Definition}
\newtheorem{theorem}{Theorem}
\newtheorem{proposition}{Proposition}
\newtheorem{cor}{Corollary}
\newcommand{\R}{\mathbb{R}}
\newcommand{\pt}{\text{ }\forall\text{ }}
\newcommand{\tq}{\text{ }:\text{ }}
\newcommand{\N}{\mathbb{N}}
\newcommand{\io}{[0,1]}
\newcommand{\iol}{]0,1]}
\newcommand{\cf}{\mathcal{F}}
\newcommand{\cd}{\mathcal{D}}
\newcommand{\cl}{\mathcal{L}}
\newcommand{\ci}{\mathcal{I}}
\newcommand{\cff}{{\mathcal{F}_2}}
\pgfplotsset{compat=1.18} 
\journal{European Journal of Operational Research}
\def\ps@pprintTitle{%
  \let\@oddhead\@empty
  \let\@evenhead\@empty
  \def\@oddfoot{\reset@font\hfil\thepage\hfil}
  \let\@evenfoot\@oddfoot
}
\begin{document}

\begin{frontmatter}
    \title{Building Interval Type-2 Fuzzy Membership Function: A Deck of Cards based Co-constructive Approach }
    %%\title{A Co-constructive Approach for Building Personalized Membership Functions from Heterogeneous Scales}
     %%
    %%
    \author[JAEN]{Bapi {\sc Dutta\corref{cor2}}}\ead{bdutta@ujaen.es}
    \author[JAEN1]{Diego {\sc  Garc{\'i}a-Zamora}}\ead{dgzamora@ujaen.es}
    
    \author[CEGIST]{Jos{\'e} {\sc Rui~Figueira}}\ead{figueira@tecnico.ulisboa.pt }
    \author[JAEN]{Luis {\sc Mart{\'i}nez}}\ead{martin@ujaen.es}
    \address[JAEN]{Department of Computer Science, University of Ja{\'e}n, 23071 Ja{\'e}n, Spain}
     \address[JAEN1]{Department of Mathematics, University
of Ja{\'e}n, 23071 Ja{\'e}n, Spain}
        \address[CEGIST]{CEGIST, Instituto Superior T\'{e}cnico,  Universidade de Lisboa, Portugal}
    %\cortext[cor2]{Corresponding author at: Department of Computer Science, University of Ja{\'e}n, 23071 Ja{\'e}n, Spain}
    %%

    \begin{abstract}
    \noindent Since its inception, Fuzzy Set has been widely used to handle uncertainty and imprecision in decision-making. However, conventional fuzzy sets, often referred to as type-1 fuzzy sets (T1FSs) have limitations in capturing higher levels of uncertainty, particularly when decision-makers (DMs) express hesitation or ambiguity in membership degree. To address this, Interval Type-2 Fuzzy Sets (IT2FSs) have been introduced by incorporating uncertainty in membership degree allocation, which enhanced flexibility in modelling subjective judgments. Despite their advantages, existing IT2FS construction methods often lack active involvement from DMs and that limits the interpretability and effectiveness of decision models. This study proposes a socio-technical co-constructive approach for developing IT2FS models of linguistic terms by facilitating the active involvement of DMs in preference elicitation and its application in multicriteria decision-making (MCDM) problems. Our methodology is structured in two phases. The first phase involves an interactive process between the DM and the decision analyst, in which a modified version of Deck-of-Cards (DoC) method is proposed to construct T1FS membership functions on a ratio scale. We then extend this method to incorporate ambiguity in subjective judgment and that resulted in an IT2FS model that better captures uncertainty in DM’s linguistic assessments. The second phase formalizes the constructed IT2FS model for application in MCDM by defining an appropriate mathematical representation of such information, aggregation rules, and an admissible ordering principle. The proposed framework enhances the reliability and effectiveness of fuzzy decision-making not only by accurately representing DM’s personalized semantics of linguistic information.% A case study illustrates the applicability of our approach in MCDM.  
    \end{abstract}
    \vspace{0.25cm}
    \begin{keyword}
  {Multiple Criteria Analysis \sep Analytics} \sep Interval type-2 fuzzy membership \sep Co-constructive Process \sep Heterogeneous Scales
    \end{keyword}
\end{frontmatter}

\section{Introduction}\label{sec: Introduction}
\noindent  Fuzzy Set Theory, introduced by Zadeh in 1965, provides a powerful mathematical framework for handling uncertainty and imprecision in decision-making associated with human subjective judgments\citep{zadeh1965, bellman1970}. Unlike classical sets, where an element either belongs or does not belong to a set, fuzzy sets allow partial belongingness through membership degree, represented by a membership function by assigning every element to a membership degree from the unit interval $[0,1]$ \citep{Klir1994}. This modelling flexibility has made fuzzy sets widely applicable in decision-making scenarios involving subjective judgments, linguistic assessments, and vague information including multi-criteria decision-making (MCDM) \citep{pedrycz2011fuzzy}.  

However, conventional fuzzy sets, often referred to as Type-1 Fuzzy Sets (T1FS), are unable to model higher levels of uncertainty, especially when decision-makers express their opinions with hesitation, ambiguity, or vague information \citep{mendel2002type, torra2010hesitant}. One key limitation is that the membership degree itself is precise, meaning it cannot fully capture the uncertainty inherent in human cognition and linguistic assessments \citep{mendel2007type}. To overcome these drawbacks, Interval Type-2 Fuzzy Sets (IT2FSs) were introduced as an extension of Type-1 Fuzzy Sets (T1FSs), in which, the membership degree itself is fuzzy, meaning it is defined by an interval ($\subset [0,1]$) rather than a precise value \citep{zadeh1975concept,liang2000interval, bustince2015historical}. This enhanced ability to capture uncertainty in human judgments enables the development of a more robust and reliable decision-making framework based on IT2FS, effectively integrating expert opinions and subjective assessments \citep{celik2015,chen2010fuzzy}. However, constructing such IT2FSs for decision-making is not straightforward, as there are many potential challenges, including capturing the subjectivity of human judgments, which could impact the effectiveness and efficiency of the decision-making process.

In expert-driven decision-making, IT2FSs are used to model the fuzzy meaning of linguistic terms associated with words \citep{kahraman2014fuzzy}. There are several ways to construct such IT2FS model for linguistic term in the literature. One common approach frequently observes in the literature is that it is directly provided by an expert by eliciting the parameters/shape associated with the upper and lower membership functions \citep{chen2014electre}, even often directly assume encoding of the linguistic terms by predefined IT2FSs \citep{hu2013multi}. There is a small literature about obtaining the IT2FS model for a linguistic term by collecting interval data on the linguistic term from a group of subjects or a single subject through survey and processing them to elicit the parameters of upper and lower memberships such as Interval approach \citep{liu2008encoding}, Extended Interval Approach \citep{wu2011enhanced} and HM Approach \citep{hao2015encoding}. Construction from T1FS(s) given by the experts or a single expert is also proposed in the literature \citep{pagola2012interval}. Although capturing uncertainty in human subjectivity is essential for building such model for linguistic term, decision-makers are rarely involved in a close interaction with a decision analyst during the construction process. This lack of interaction undermines the proper elicitation of uncertainty associated with subjective judgments and affects the interpretability of the model to the decision-maker.

The construction of reliable IT2FS model for linguistic term is the key to successfully applying them to expert-driven decision-making process \citep{muhuri2017user}. However, most of the existing applications of these methods in solving decision-making problem do not involve expert(s) fully in the construction IT2FS model \citep{gupta2022gentle,alaei2025efficient,wan2021integrated}. Instead, it has been assumed that a predefined IT2FFs model for linguistic terms and DMs are asked to provide the linguistic terms only \citep{celik2015}. Further, the studies that directly ask to provide the parameters of the upper and lower memberships require high cognitive effort and knowledge of the IT2FS \citep{chen2015interval, chen2014electre}, which makes it very difficult for DMs to express their subjective judgments appropriately. Additionally, in the survey-based approaches the DMs involvement is limited to provide an interval estimate regarding the linguistic terms \citep{mendel2010perceptual}, and it is hard to interpret how IT2FS model of the linguistic term is obtained from that interval without their involvement in the process along with the elicitation uncertainty \citep{baratimehr2023}. To overcome these issues, a socio-technical co-construction approach is required, ensuring not only the active involvement of decision-makers in a dynamic and participatory process but also the effective elicitation of their subjective judgments \citep{bottero2018, costa2022}. In a nutshell, there is scope to develop methodologies that more effectively integrate the evolving preferences and insights of decision-makers in constructing IT2FS models for linguistic terms.

In this study, we propose a socio-technical and co-constructive approach to facilitate the construction of the expert-driven IT2FS model of the linguistic terms and develop a framework for solving fuzzy decision-making problems within the MCDM framework. The proposed methodologies could be divided into two phases. In the first phase, we utilize the co-constructive approach between decision-maker and decision analyst for the interactive preference elicitation process in building IT2FSs model for linguistic terms associated with a linguistic scale. The basis of the construction process is a modified version of Deck-of-Card (DoC) method \citep{corrente2021}, which enables us to build the membership function of T1FS in a ratio scale that allows meaningful interpretations of the relationship between membership degrees of any two elements \citep{Marchant2003}. Employing this process, we develop a three-step methodology to build an IT2FS model for a linguistic scale. In the first step, we map the labels of the linguistic terms onto a value scale \citep{Figueira2010,bottero2015,figueira2023}. Subsequently, in step two, the core and support of IT2FS model for each linguistic term are identified through a co-constructive process proposed by \cite{DoC-MF}. In the final step, we incorporate hesitation of DM's subjective judgments into the modelling of modified DoC method for constructing T1MF, and that subsequently induced uncertainty into membership degree allocations and resulted in upper and lower membership function of IT2FS model of the linguistic terms. 

In the second phase, we focus on mathematically formalize IT2FS model of the linguistic terms derived through the proposed socio-technical and co-constructive approach to enable its application in the expert-driven MCDM problem. For that purpose, we first define their arithmetic operational laws for such information based on which the aggregation rule is developed. We further define the total ordering of this kind of IT2FSs through the admissible ordering principle of the T1MF \citep{GARCIAZAMORA2024108863}. Finally, this information fusion mechanisms are utilized in the resolution of a practical problem in MCDM context. 

The paper is organized as follows. In Section \ref{sec: prelim}, the basic concepts of fuzzy sets and their operational laws, and concepts of type-2 fuzzy sets are introduced. It also provides a brief primer on the construction of DoC-T1MF, its mathematical representation and operational rules. Section \ref{sec: DoCMembership} propose the construction method of DoC-T2MF through a socio-technical co-constructive process by utilizing a modified version of DoC that can build membership function on a ratio scale. In Section \ref{sec: Aggregation Functions}, we formalize the mathematical representation of DoC-T2MF and introduce the aggregation and ordering rules for such information. Finally, we make concluding remarks of our study in Section \ref{sec: Conclusions}.

%This interval-based approach provides a way to model the higher-order uncertainty that arises when experts provide subjective or inconsistent evaluations. 

% Structure:
% -Introduce FS and IT2FS
% -Talk about the limitations of current construction methods for IT2FS
% -Talk about DoC-MFs
% -Novelties
% i) Ratio Scale
% ii) Co-constructive approach for building IT2FNs
% iii) Aggregation and orderings for MCDM

% \textcolor{red}{our construction process model the intrapersonal uncertainty of decision-maker not interpersonal}

% We might consider IT2 as a particular case of \textit{$n$-dimensional fuzzy numbers}
% \begin{itemize}
%     \item Computations and rankings are possible
%     \item Justify as opinions under different scenarios, such as different kinds of risks
%     \item We can also justify using interval and ratio scale
%     \item Core must be common
%     \item It would be convenient to be able to identify the scenarios at any time. If not, it is necessary to define an order.
%     \item IT2 would be a particular case
%     \item We can relate this with the $n$-dimensional fuzzy sets introduced in \cite{multidimensional}
    
%\end{itemize}
% \color{black}

%%%%%%%%%%%%%%%%%%%%%%%%%%%%%%%%%%%%%%%%%%%%%%%%%%%%%%
\section{Preliminaries}\label{sec: prelim}
\noindent  This section aims to set the definitions and notations we use throughout the manuscript.
\subsection{Some notions on Fuzzy Sets Theory}\label{subsec: FuzzySets}
\noindent We begin by introducing Fuzzy Sets Theory \citep{zadeh1965}. A fuzzy set, defined over a given universe of discourse, represented by $X$, generalizes the concept of a set's membership function. It is worth noting that any subset $A \subseteq X$ can be described by its characteristic function $\chi_A: X \to \{0,1\}$, which is defined as follows:

\begin{equation*}
\chi_A(x)=\left\{
\begin{array}{ll}
1 & \text{ if } x \in A,\\
0 & \text{ if } x \notin A.
\end{array}
\right.
\end{equation*}

In contrast, fuzzy sets allow membership functions to take values between 0 or 1.

\begin{definition}[Fuzzy Sets \citep{zadeh1965}]
Let $X$ represent the universe of discourse. A function $A: X \to [0,1]$ is called a fuzzy set on $X$. The value $A(x)$ at a given $x \in X$ signifies the degree to which $x$ belongs to $A$. The set of all fuzzy sets on $X$ is denoted as $\mathcal{F}(X)$. The support of $A$ is the set $\text{supp}_A = \{x \in X : A(x) > 0\}$. The core of $A$ is the set $\text{core}_A = \{x \in X : A(x) = 1\}$. A fuzzy set is called normal if its core is non-empty. For any $\alpha \in (0,1]$, the $\alpha$-cut of $A$ is defined as $A_\alpha = \{x \in X : A(x) \geqslant \alpha\}$. Additionally, $A_0$ refers to the topological closure of $\text{supp}(A)$.
\end{definition}

Fuzzy numbers represent a specific class of fuzzy sets that extend the notion of numbers on the real line.

\begin{definition}[Fuzzy number \citep{Klir1994}]
   \noindent A fuzzy set $A: \mathbb{R} \to [0,1]$ on the real numbers $\mathbb{R}$ is called a fuzzy number if $A$ is normal, and for each $\alpha \in [0,1]$, the $\alpha$-cut $A_\alpha$ is a bounded and closed interval.
    For any $\alpha \in [0,1]$, we represent $A_\alpha$ as the interval $[A_\alpha^-, A_\alpha^+]$, where $A_\alpha^-$ and $A_\alpha^+$ are the lower and upper bounds of the $\alpha$-cut, respectively. The set of all fuzzy numbers whose support is contained in $\io$ is denoted by $\cf$. 
\end{definition}

The standard arithmetic operations on real numbers can be extended to fuzzy numbers as follows \citep{Klir1994}.

\begin{definition}[Fuzzy arithmetic]
\label{def:aggregation_fn}
Let $A,B \in \mathcal{F}(\mathbb{R})$. Define $A \oplus B$ and $A \odot B$ in $\mathcal{F}(\mathbb{R})$ as
\begin{equation*}
(A \oplus B)(z) = \operatorname*{sup}_{x+y=z} \operatorname*{min}\{A(x), B(y)\}, \;
(A \odot B)(z) = \operatorname*{sup}_{x \cdot y=z} \operatorname*{min}\{A(x), B(y)\}
\end{equation*}
for all $z \in \mathbb{R}$.
\end{definition}

Moreover, these operations on fuzzy numbers can be expressed through $\alpha$-cuts \citep{Klir1994}.

\begin{proposition}[$\alpha$-cut representation for fuzzy arithmetic]\label{prop:sum-prod-fn}
    Let $A, B \in \mathcal{F}(\mathbb{R})$ and $\alpha \in (0,1]$. Then:
    \begin{align*}
        (A \oplus B)_\alpha &= [A_\alpha^- + B_\alpha^-, A_\alpha^+ + B_\alpha^+] \\
        (A \odot B)_\alpha &= [\min\{A_\alpha^-B_\alpha^-, A_\alpha^+B_\alpha^-, A_\alpha^-B_\alpha^+, A_\alpha^+B_\alpha^+\}, \\
        &\max\{A_\alpha^-B_\alpha^-, A_\alpha^+B_\alpha^-, A_\alpha^-B_\alpha^+, A_\alpha^+B_\alpha^+\}].
    \end{align*}
\end{proposition}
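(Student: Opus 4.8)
The plan is to reduce both identities to elementary interval arithmetic applied to the $\alpha$-cuts, using the fact that for $A,B\in\mathcal{F}(\R)$ and $\alpha\in(0,1]$ the cuts $A_\alpha=[A_\alpha^-,A_\alpha^+]$ and $B_\alpha=[B_\alpha^-,B_\alpha^+]$ are nonempty, closed, and bounded intervals by definition of a fuzzy number. The two interval facts I would record first are: for compact intervals, $\{x+y : x\in A_\alpha,\, y\in B_\alpha\}=[A_\alpha^-+B_\alpha^-,\,A_\alpha^++B_\alpha^+]$, and $\{x\cdot y : x\in A_\alpha,\, y\in B_\alpha\}=[\min\Pi,\max\Pi]$ with $\Pi=\{A_\alpha^-B_\alpha^-,A_\alpha^+B_\alpha^-,A_\alpha^-B_\alpha^+,A_\alpha^+B_\alpha^+\}$. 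The sum identity is immediate; the product identity follows because $(x,y)\mapsto xy$ is continuous on the connected compact rectangle $A_\alpha\times B_\alpha$, so its image is a compact interval whose endpoints, being extrema of a map that is affine in each variable separately, are attained at the four corners.

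For the sum, I would prove $(A\oplus B)_\alpha=[A_\alpha^-+B_\alpha^-,\,A_\alpha^++B_\alpha^+]$ by double inclusion. For ``$\supseteq$'', given $z$ in the right-hand interval, the first interval fact yields $x\in A_\alpha$, $y\in B_\alpha$ with $x+y=z$; then $A(x)\ge\alpha$ and $B(y)\ge\alpha$, so $\min\{A(x),B(y)\}\ge\alpha$ and hence $(A\oplus B)(z)\ge\alpha$, i.e. $z\in(A\oplus B)_\alpha$. For ``$\subseteq$'', suppose $(A\oplus B)(z)\ge\alpha$. By definition of the supremum I can choose $x_n+y_n=z$ with $\min\{A(x_n),B(y_n)\}\ge\alpha-\tfrac1n$, so that for large $n$ one has $x_n\in A_{\alpha/2}$ and $y_n\in B_{\alpha/2}$, both compact. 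Extracting a convergent subsequence $x_{n_k}\to x^\ast$ forces $y_{n_k}=z-x_{n_k}\to z-x^\ast=:y^\ast$, and the upper semicontinuity of $A$ and $B$ (which is precisely the closedness of all their $\alpha$-cuts) gives $A(x^\ast)\ge\limsup_k A(x_{n_k})\ge\alpha$ and likewise $B(y^\ast)\ge\alpha$. Hence $x^\ast\in A_\alpha$, $y^\ast\in B_\alpha$, and $z=x^\ast+y^\ast$ lies in the desired interval.

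The product identity is handled by the same template: ``$\supseteq$'' again uses the interval fact to write any $z\in[\min\Pi,\max\Pi]$ as $z=x\cdot y$ with $x\in A_\alpha$, $y\in B_\alpha$, and ``$\subseteq$'' repeats the compactness-plus-upper-semicontinuity extraction, with $y_{n_k}=z/x_{n_k}$ when $x^\ast\neq0$ and a direct argument when the limiting product involves a zero factor. I would also note in passing that $A\oplus B$ and $A\odot B$ are themselves fuzzy numbers, so the left-hand sides are genuine closed intervals and the stated equalities are between intervals, not merely between sets.

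The main obstacle is the ``$\subseteq$'' direction, specifically the passage from ``the supremum is $\ge\alpha$'' to ``there exist admissible points $x^\ast\in A_\alpha$, $y^\ast\in B_\alpha$ realizing $z$''. This is exactly where the hypotheses defining a fuzzy number are indispensable: boundedness of the cuts supplies the compactness needed to extract the convergent subsequence, while closedness of the cuts (equivalently, upper semicontinuity of the membership function) is what upgrades the limiting inequality $\liminf_k A(x_{n_k})\ge\alpha$ into the membership $x^\ast\in A_\alpha$. Without upper semicontinuity the supremum need not be attained and the right-hand interval could be strictly larger, so I would state this step carefully rather than gloss over it.
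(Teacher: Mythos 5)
Your proof is correct, but there is nothing in the paper to compare it against: the proposition is recalled from the literature (the \citep{Klir1994} reference to Klir and Yuan) and the paper gives no proof of it. Your argument is the standard one, and you have located the crux correctly: the inclusion $\supseteq$ is immediate from evaluating the sup--min extension at cut points, while $\subseteq$ is where the defining properties of a fuzzy number are indispensable --- boundedness of the cuts gives the compactness for the subsequence extraction, closedness of all cuts (equivalently, upper semicontinuity of $A$ and $B$) turns $\liminf_k A(x_{n_k})\geqslant\alpha$ into $x^\ast\in A_\alpha$, and normality guarantees the cuts are nonempty so the $\supseteq$ direction is not vacuous. Two small refinements. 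First, in the product case you can avoid the division $y_{n_k}=z/x_{n_k}$ and the zero-factor case split entirely: since $x_n\in A_{\alpha/2}$ and $y_n\in B_{\alpha/2}$ both lie in compact sets, extract a common subsequence with $x_{n_k}\to x^\ast$ and $y_{n_k}\to y^\ast$; continuity of multiplication gives $x^\ast y^\ast=\lim_k x_{n_k}y_{n_k}=z$ with no case distinction, after which upper semicontinuity applies exactly as in the sum. Second, your parenthetical remark that $A\oplus B$ and $A\odot B$ are themselves fuzzy numbers is not something you may ``note in passing'' independently of the proposition --- normality of $A\oplus B$ and the interval form of its cuts are consequences of the very identities being proved (together with the standard fact that the cut family is nested and left-continuous), so either derive it afterwards or phrase the conclusion as an equality of sets, which is all the statement claims. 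Neither point is a gap in substance; the proof stands.
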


When fuzzy numbers are used in decision-making, it is essential to have a method for comparing them in order to reach conclusions. In this regard, the admissible orders for fuzzy numbers \citep{zumelzu2022} provide a reasonable tool for comparing them by a reflexive, anti-symmetric, transitive, and total binary relation.

In order to provide a better modeling of uncertainty, the Interval Type 2 Fuzzy Sets were introduced as mapping valuated on $\ci=\{x\in[0,1]^2\text{ : } x_1\leqslant x_2\}$.
\begin{definition}[Interval Type 2 Fuzzy Set]
    Let $X$ be the universe of discourse. An interval type 2 fuzzy set is a mapping $A=(\underline{A},\overline{A}):X\to\ci$.
\end{definition}

\subsection{Deck of Card- based Membership functions}

\noindent  Recently, the DoC method was applied to construct fuzzy numbers for decision-making processes \citep{DoC-MF}. This socio-technical method emphasizes the interaction between the decision-maker and the analyst, co-constructing the membership function to reflect the decision-maker’s preferences. The approach ensures that the resulting membership functions, termed DoC-based Membership Functions (DoC-MFs), are interpretable and tailored to the decision-maker's understanding. The construction process consists of three main steps \citep{DoC-MF}:
\begin{enumerate}
    \item \textit{Value Function Construction}. The original heterogeneous scales of the criteria (linguistic, discrete, interval, etc.) are transformed into a unified interval scale $[0,1]$, based on the decision-maker’s input using a deck of cards.
    \item \textit{Core and Support Definition}. The core and support are defined by eliciting the decision-maker's preferences through an interactive process.
    \item \textit{Left and Right-hand Sides}. The final step involves building the left and right-hand sides of the membership function based on further input from the decision-maker, reflecting varying degrees of confidence in the fuzzy set.
\end{enumerate}

Given a fuzzy number $A:\R\to[0,1]$, we can consider the points in which $A$ is not differentiable $\mathcal{D}^A=\{x\in\R\tq \nexists A^\prime(x)\}$, and its image set, namely $\mathcal{L}^A=\{\lim_{x\to x_0^+} A(x),\lim_{x\to x_0^-}A(x), A(x_0)\tq x_0\in\mathcal{D}^A\}$. At this point, let us introduce the formal definition of DoC-MF:
\begin{definition}[DoC-MF \cite{DoC-MF}]
    Let $A:\R\to[0,1]$ be a fuzzy number. We say that $A$ is a DoC-MF if:
    \begin{enumerate}
        \item[i)]$\mathcal{D}^A$ is a finite set,
        %\item[ii)] For each $x\notin \mathcal{D}_A\cup A_1$ there exists $\alpha_1$
        \item[ii)] For each $\alpha\in]0,1[$ there exist $\alpha_d,\alpha_u\in\mathcal{L}^A$ such that $\alpha_d<\alpha_u$, $]\alpha_d,\alpha_u[\cap\mathcal{L}^A=\emptyset$, $\alpha_d\leqslant \alpha\leqslant\alpha_u$ and
        \begin{align*}
            A_{\alpha}^-&=\frac{\alpha-\alpha_d}{\alpha_u-\alpha_d}(A_{\alpha_u}^--A_{\alpha_d}^-)+A_{\alpha_d}^-,\\
            A_{\alpha}^+&=\frac{\alpha-\alpha_d}{\alpha_u-\alpha_d}(A_{\alpha_u}^+-A_{\alpha_d}^+)+A_{\alpha_d}^+.
        \end{align*}
    \end{enumerate}
\end{definition}
In the original paper, the authors proved that DoC-MFs were completely determined by the $\alpha$-cuts corresponding to the values in $\mathcal{L}^A$ \citep{DoC-MF}. To do so, for a finite set $L\subset [0,1]$ containing $\{0,1\}$, they considered the mapping $\beta^{A,L}:\R\to L^2$ defined by:
\begin{equation*}
    \beta^{A,L}(x)=\begin{cases}
        (0,0)&\text{ if }x\notin A_0\\
        (1,1)&\text{ if }x\in A_1\\
        (\max\limits_{x\in A_\alpha}\{\alpha\in L \},\min\limits_{x\notin A_\alpha}\{\alpha\in L\})&\text{ if }x\in A_0\setminus A_1\\
    \end{cases}
\end{equation*}
with the notation $\beta^{A,L}(x)=(\beta_1^{A,L}(x),\beta_2^{A,L}(x))\pt x\in\R$ to refer to the coordinates of the mapping $\beta^{A,L}$.

\begin{theorem}\label{theo:charact}
    Let $A: \mathbb{R} \to [0, 1]$ be a DoC-MF, and consider a finite set $L$ of $\alpha$-levels such that $\cl^A \subseteq L$. Then, the mapping $A^L: \mathbb{R} \to [0, 1]$ defined by
    \[
    A^L(x) = \min \left\{ \frac{x - A^+_1}{A^+_2 - A^+_1}, \frac{x - A^-_1}{A^-_2 - A^-_1} \right\} (\beta^{A,L}_2(x) - \beta^{A,L}_1(x)) + \beta^{A,L}_1(x),
    \]
    where $[A^-_1, A^+_1] = A^{\beta^{A,L}_1(x)}$ and $[A^-_2, A^+_2] = A^{\beta^{A,L}_2(x)}$ for all $x \in \mathbb{R}$, satisfies $A^L(x) = A(x)$ for all $x \in \mathbb{R}$.
\end{theorem}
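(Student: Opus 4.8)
The plan is to fix an arbitrary $x \in \R$, write $\alpha^* = A(x)$, and show that the right-hand side of the displayed formula evaluates to $\alpha^*$. Following the piecewise definition of $\beta^{A,L}$, I would first dispose of the two degenerate cases. If $x \notin A_0$ then $\alpha^* = 0$ and $\beta_1^{A,L}(x) = \beta_2^{A,L}(x) = 0$; if $x \in A_1$ then $\alpha^* = 1$ and $\beta_1^{A,L}(x) = \beta_2^{A,L}(x) = 1$. In both situations the factor $\beta_2^{A,L}(x) - \beta_1^{A,L}(x)$ vanishes and the formula collapses to the additive term $\beta_1^{A,L}(x) = \alpha^*$, with the convention that the indeterminate ratio is immaterial once multiplied by the vanishing difference. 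The substance of the proof is therefore the main case $x \in A_0 \setminus A_1$, where $0 \le \alpha^* < 1$.

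In this case I would first make the combinatorial identification of the two levels selected by $\beta^{A,L}$. Since $x \in A_\alpha \iff \alpha \le \alpha^*$, the definition gives $\beta_1 := \beta_1^{A,L}(x) = \max\{\alpha \in L : \alpha \le \alpha^*\}$ and $\beta_2 := \beta_2^{A,L}(x) = \min\{\alpha \in L : \alpha > \alpha^*\}$ (both well defined since $0,1 \in L$), so that $\beta_1,\beta_2$ are consecutive elements of $L$ with $\beta_1 \le \alpha^* < \beta_2$ and $(\beta_1,\beta_2) \cap L = \emptyset$. Because $\mathcal{L}^A \subseteq L$, the open interval $(\beta_1,\beta_2)$ also misses $\mathcal{L}^A$. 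Applying property~(ii) of the definition of DoC-MF to any level $\alpha_0 \in (\beta_1,\beta_2)$ produces consecutive points $\alpha_d,\alpha_u \in \mathcal{L}^A$ with $\alpha_d \le \alpha_0 \le \alpha_u$ and $(\alpha_d,\alpha_u)\cap\mathcal{L}^A = \emptyset$; since no point of $\mathcal{L}^A$ lies in $(\beta_1,\beta_2)$, these must satisfy $\alpha_d \le \beta_1$ and $\beta_2 \le \alpha_u$ (otherwise $\alpha_d$ or $\alpha_u$ would itself sit in $(\beta_1,\beta_2)$). Consequently the bracketing pair is the same for every $\alpha_0 \in (\beta_1,\beta_2)$, and the interpolation equalities in~(ii) show that $\alpha \mapsto A_\alpha^-$ and $\alpha \mapsto A_\alpha^+$ coincide with a single affine map on $[\alpha_d,\alpha_u] \supseteq [\beta_1,\beta_2]$. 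I would record the byproduct that this affinity (continuity across the nodes) forbids plateaus below height $1$, so both flanks of $A$ are strictly monotone; hence $x$ lies on exactly one flank and equals the corresponding endpoint of its own cut, i.e. $x = A_{\alpha^*}^-$ or $x = A_{\alpha^*}^+$.

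Writing $c = (\alpha^* - \beta_1)/(\beta_2 - \beta_1) \in [0,1)$ and $[A_1^-,A_1^+] = A_{\beta_1}$, $[A_2^-,A_2^+] = A_{\beta_2}$, affinity gives $A_{\alpha^*}^- = A_1^- + c(A_2^- - A_1^-)$ and $A_{\alpha^*}^+ = A_1^+ + c(A_2^+ - A_1^+)$, with $A_2^- - A_1^- \ge 0$ and $A_2^+ - A_1^+ \le 0$ from the nesting of the cuts. If $x = A_{\alpha^*}^-$ (left flank) then the left ratio in the formula equals exactly $c$; using $x = A_{\alpha^*}^- \le A_{\alpha^*}^+ = A_1^+ + c(A_2^+ - A_1^+)$ and dividing by the nonpositive quantity $A_2^+ - A_1^+$ (which reverses the inequality) shows the right ratio is $\ge c$, so the minimum equals $c$. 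The case $x = A_{\alpha^*}^+$ is symmetric, with the roles of the two ratios exchanged and $A_2^- - A_1^- \ge 0$ used instead. In either case the formula returns $c(\beta_2 - \beta_1) + \beta_1 = \alpha^* = A(x)$, as required.

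The part I expect to require the most care is the passage from the pointwise existential statement in property~(ii) to genuine affinity of the endpoint functions on the whole segment $[\beta_1,\beta_2]$ — this is precisely where the hypothesis $\mathcal{L}^A \subseteq L$ is essential — together with the sign bookkeeping that forces the $\min$ operator to select the flank on which $x$ actually lies. A secondary technical point is the treatment of possible vertical segments, where a denominator $A_2^\pm - A_1^\pm$ could vanish; I would handle these, like the degenerate configurations $\beta_1 = \beta_2$, through the interpolation convention, noting that the strict monotonicity of the flanks (itself a consequence of~(ii)) keeps the relevant denominator nonzero away from these boundary cases.
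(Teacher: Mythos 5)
Your proposal is correct, but note the point of comparison: this paper does not actually contain a proof of Theorem~\ref{theo:charact} --- the result is recalled from \citep{DoC-MF}, where the original proof lives --- so your argument can only be measured against the canonical argument for this characterization, and it matches it in all the load-bearing steps. The reduction to $x \in A_0 \setminus A_1$ (the two degenerate branches of $\beta^{A,L}$ collapsing to the additive term), the identification of $\beta_1^{A,L}(x), \beta_2^{A,L}(x)$ as the consecutive levels of $L$ bracketing $\alpha^* = A(x)$, the key use of $\mathcal{L}^A \subseteq L$ to conclude $(\beta_1,\beta_2) \cap \mathcal{L}^A = \emptyset$ and hence that the bracketing pair $[\alpha_d,\alpha_u]$ from property~(ii) contains $[\beta_1,\beta_2]$, the resulting affinity of $\alpha \mapsto A_\alpha^\pm$ on that segment (an affine map restricted to a subinterval interpolates its endpoint values, so passing from $[\alpha_d,\alpha_u]$ to $[\beta_1,\beta_2]$ is legitimate), the no-plateau consequence giving $x \in \{A_{\alpha^*}^-, A_{\alpha^*}^+\}$, and the sign bookkeeping ($A_2^- - A_1^- \geqslant 0$, $A_2^+ - A_1^+ \leqslant 0$ by nestedness of cuts) forcing the minimum to select the flank on which $x$ lies with common value $c = (\alpha^*-\beta_1)/(\beta_2-\beta_1)$: this is exactly the right skeleton, and each step is justified or justifiable as you sketch it.

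One refinement where your write-up is looser than it should be: the vanishing-denominator convention, which you defer to ``the interpolation convention,'' needs a definite sign stipulation, not just a remark that it is immaterial. If, say, $A_2^+ = A_1^+$ while $x$ lies on the left flank, then $x < A_1^+$ strictly (since $x = A_1^+ = A_2^+$ would put $x \in A_{\beta_2}$ and give $A(x) \geqslant \beta_2 > \alpha^*$), so the degenerate ratio has a strictly negative numerator over a zero denominator; a careless signed convention reading it as $-\infty$ would make the $\min$ return the wrong flank. The correct stipulation is that a ratio with zero denominator is read as $+\infty$, equivalently that the $\min$ ranges only over ratios with nonzero denominators. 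Your own argument already supplies the fact that makes this safe --- the denominator of the ratio attached to the flank actually containing $x$ cannot vanish, by the same $A(x) \geqslant \beta_2$ contradiction --- so this is a one-line repair rather than a gap, but it should be stated explicitly since the formula in the theorem is silent about it.
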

This allows defining the addition, product by scalars, and weighted average of DoC-MFs \cite{DoC-MF}.
 \begin{cor}[Addition of DoC-MFs]\label{cor:sum-pwlfn}
        Let $A,B:\R\to[0,1]$ be two DoC-MFs and consider a finite set $L$ of $\alpha$-levels such that $\mathcal{L}^A\cup \mathcal{L}^B\subset L$. Then, the mapping $(A+B)^L:\R\to\io$ defined by   
        \begin{equation*}
          \begin{aligned}
             (A+B)^L(x)=\min\left\{\frac{x-A_1^+-B_1^+}{A_2^++B_2^+-A_1^+-B_1^+},\;\,\frac{x-A_1^--B_1^-}{A_2^-+B_2^--A_1^--B_1^-}\right\}\cdot\\(\beta_2^{A\oplus B,L}(x)-\beta_1^{A\oplus B,L}(x))+\beta_1^{A\oplus B,L}(x)
         \end{aligned}
     \end{equation*}
     \noindent $\pt x\in\R$, where $[A_1^-,A_1^+]=A_{\beta_1^{A\oplus B,L}(x)}$, $[B_1^-,B_1^+]=B_{\beta_1^{A\oplus B,L}(x)}$, $[A_2^-,A_2^+]=A_{\beta_2^{A\oplus B,L}(x)}$, and $[B_2^-,B_2^+]=B_{\beta_2^{A\oplus B,L}(x)}\forall x\in\R$ satisfies $(A+B)^L(x)=(A\oplus B)(x)\pt x\in\R$.
    \end{cor}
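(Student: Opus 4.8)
The plan is to reduce the statement to the characterization theorem (Theorem~\ref{theo:charact}) applied to the single fuzzy number $A \oplus B$, once we have verified that $A \oplus B$ is itself a DoC-MF whose set of relevant $\alpha$-levels is contained in $L$.

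First I would establish that $A \oplus B$ is a DoC-MF. By Proposition~\ref{prop:sum-prod-fn}, for every $\alpha \in (0,1]$ the endpoints of the $\alpha$-cut satisfy $(A \oplus B)_\alpha^- = A_\alpha^- + B_\alpha^-$ and $(A \oplus B)_\alpha^+ = A_\alpha^+ + B_\alpha^+$. Since $A$ and $B$ are DoC-MFs, condition (ii) of the definition shows that the maps $\alpha \mapsto A_\alpha^\pm$ and $\alpha \mapsto B_\alpha^\pm$ are piecewise linear, with slope changes occurring only at the levels of $\mathcal{L}^A$ and $\mathcal{L}^B$ respectively. Their sums are then piecewise linear with breakpoints confined to $\mathcal{L}^A \cup \mathcal{L}^B$, so on each subinterval $[\alpha_d,\alpha_u]$ bounded by consecutive elements of $\mathcal{L}^A\cup\mathcal{L}^B$ the endpoints $(A\oplus B)_\alpha^\pm$ interpolate linearly between their values at $\alpha_d$ and $\alpha_u$. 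This verifies condition (ii) for $A\oplus B$ and simultaneously yields $\mathcal{L}^{A\oplus B}\subseteq \mathcal{L}^A\cup\mathcal{L}^B$. As this union is finite and the $\alpha$-cut endpoints are piecewise linear, $A\oplus B$ is piecewise linear as a function of $x$ and hence has finitely many non-differentiability points, which is condition (i).

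With $A\oplus B$ confirmed to be a DoC-MF and $\mathcal{L}^{A\oplus B}\subseteq\mathcal{L}^A\cup\mathcal{L}^B\subseteq L$, I would then invoke Theorem~\ref{theo:charact} for the fuzzy number $C:=A\oplus B$ and the level set $L$. This guarantees $C^L(x)=C(x)=(A\oplus B)(x)$ for all $x\in\R$, where $C^L$ is built from $\beta_1^{C,L},\beta_2^{C,L}$ together with the endpoints $[C_1^-,C_1^+]=C_{\beta_1^{C,L}(x)}$ and $[C_2^-,C_2^+]=C_{\beta_2^{C,L}(x)}$.

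The final step is a substitution with Proposition~\ref{prop:sum-prod-fn} once more. Writing $\beta_i:=\beta_i^{A\oplus B,L}(x)$ and using $[A_i^-,A_i^+]=A_{\beta_i}$, $[B_i^-,B_i^+]=B_{\beta_i}$ as in the statement, we obtain $C_i^\pm=A_i^\pm+B_i^\pm$ for $i=1,2$. Plugging these identities into the formula provided by Theorem~\ref{theo:charact} turns it verbatim into the displayed expression for $(A+B)^L(x)$, and therefore $(A+B)^L(x)=(A\oplus B)(x)$, as claimed. The main obstacle I anticipate lies in the first step: one must argue carefully that summing the two piecewise-linear endpoint functions introduces no breakpoints outside $\mathcal{L}^A\cup\mathcal{L}^B$ and preserves the interpolation structure required by condition (ii). Once that is secured, the remainder is a routine matching of symbols between the two formulas.
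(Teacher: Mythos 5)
Your proof is correct and takes essentially the route the paper intends: the paper states this corollary without proof (deferring to \citep{DoC-MF}), presenting it as a consequence of Theorem~\ref{theo:charact}, and your argument supplies exactly that derivation --- first verifying via Proposition~\ref{prop:sum-prod-fn} that $A\oplus B$ is itself a DoC-MF with $\mathcal{L}^{A\oplus B}\subseteq\mathcal{L}^A\cup\mathcal{L}^B\subseteq L$, then applying the characterization theorem to $C:=A\oplus B$ and substituting $C_i^\pm=A_i^\pm+B_i^\pm$ to recover the displayed formula. The subtlety you flag (that summing the piecewise-linear endpoint maps creates no breakpoints outside $\mathcal{L}^A\cup\mathcal{L}^B$, so the interpolation structure of condition~(ii) is preserved) is precisely the one nontrivial step, and your handling of it is sound.
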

      \begin{cor}[Product by scalars for DoC-MFs]\label{cor:prod-pwlfn}
        Let $A:\R\to[0,1]$ be a  DoC-MF and consider $r>0$ and a finite set $L$ of $\alpha$-levels such that $\mathcal{L}^A\subset L$. Then, the mapping $(rA)^L:\R\to\io$ defined by   \begin{equation*}
         (rA)^L(x)=\min\left\{\frac{x-rA_1^+}{rA_2^+-rA_1^+},\;\,\frac{x-rA_1^-}{rA_2^--rA_1^-}\right\}(\beta_2^{r\odot A,L}(x)-\beta_1^{r \odot A,L}(x))+\beta_1^{r \odot A,L}(x)
     \end{equation*}
     \noindent $\pt x\in\R$, where $[A_1^-,A_1^+]=A_{\beta_1^{r \odot A,L}(x)}$ and $[A_2^-,A_2^+]=A_{\beta_2^{r\odot A,L}(x)} \pt x\in\R$, satisfies $(rA)^L(x)=(r\odot A)(x)\pt x\in\R$.
    \end{cor}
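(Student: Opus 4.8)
The plan is to reduce the statement to Theorem~\ref{theo:charact} applied to the fuzzy number $r\odot A$, exactly as the addition case in Corollary~\ref{cor:sum-pwlfn} is reduced to the same characterization. The first thing I would do is pin down the $\alpha$-cuts of $r\odot A$. Regarding $r$ as the crisp fuzzy number concentrated at $r$, Proposition~\ref{prop:sum-prod-fn} gives $(r\odot A)_\alpha=[\min\{rA_\alpha^-,rA_\alpha^+\},\max\{rA_\alpha^-,rA_\alpha^+\}]$, and since $r>0$ and $A_\alpha^-\leqslant A_\alpha^+$ this collapses to $(r\odot A)_\alpha=[rA_\alpha^-,rA_\alpha^+]$ for every $\alpha\in(0,1]$. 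Equivalently, unwinding Definition~\ref{def:aggregation_fn} with the crisp factor $r$ shows $(r\odot A)(z)=A(z/r)$, i.e. $r\odot A$ is simply the horizontal rescaling of $A$ by the positive factor $r$; this identity is the most convenient form for the differentiability and image-set bookkeeping below.

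Next I would verify that $r\odot A$ is itself a DoC-MF whose set of attained levels is unchanged. From $(r\odot A)(z)=A(z/r)$ it is immediate that $\mathcal{D}^{r\odot A}=\{rx_0\tq x_0\in\mathcal{D}^A\}$ is finite, so condition (i) of the DoC-MF definition holds, and that the image set is preserved, $\mathcal{L}^{r\odot A}=\mathcal{L}^A\subset L$. Condition (ii) I would inherit from $A$: substituting $(r\odot A)_\alpha^\pm=rA_\alpha^\pm$ into the two interpolation identities of the definition, the common factor $r$ cancels on both sides and each identity reduces to the corresponding one for $A$, which holds because $A$ is a DoC-MF. Hence $r\odot A$ is a DoC-MF with $\mathcal{L}^{r\odot A}\subset L$, which is exactly the hypothesis needed to invoke the characterization.

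With these facts established, Theorem~\ref{theo:charact} applies to $r\odot A$ and yields $(r\odot A)^L(x)=(r\odot A)(x)$ for all $x\in\R$, where $(r\odot A)^L$ is the piecewise-linear formula of that theorem instantiated with the $\alpha$-cuts of $r\odot A$. It then remains to check that this instantiation coincides with the expression defining $(rA)^L$ in the statement. Writing $[A_1^-,A_1^+]=A_{\beta_1^{r\odot A,L}(x)}$ and $[A_2^-,A_2^+]=A_{\beta_2^{r\odot A,L}(x)}$, the relevant cuts of $r\odot A$ are $[rA_1^-,rA_1^+]$ and $[rA_2^-,rA_2^+]$; plugging these into the theorem's formula produces precisely
\[
\min\left\{\frac{x-rA_1^+}{rA_2^+-rA_1^+},\;\frac{x-rA_1^-}{rA_2^--rA_1^-}\right\}\bigl(\beta_2^{r\odot A,L}(x)-\beta_1^{r\odot A,L}(x)\bigr)+\beta_1^{r\odot A,L}(x),
\]
which is the defining expression for $(rA)^L(x)$. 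Chaining the two identities then gives $(rA)^L(x)=(r\odot A)(x)$ for all $x\in\R$, as claimed.

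I expect the genuinely delicate point to be the bookkeeping of the level-assignment maps $\beta^{r\odot A,L}$ together with the claim $\mathcal{L}^{r\odot A}=\mathcal{L}^A$: a horizontal rescaling moves the break-points of the graph but leaves the attained membership values untouched, and it is precisely this invariance that keeps the same level set $L$ admissible and makes the $\beta$-indices line up with the cuts $rA_i^\pm$ appearing in the displayed formula. Once that alignment is confirmed, the remaining algebra is the routine cancellation of the factor $r$ and poses no obstacle.
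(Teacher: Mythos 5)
Your proposal is correct and follows essentially the same route the paper intends: the corollary is presented as a direct consequence of the characterization in Theorem~\ref{theo:charact}, applied to $r\odot A$ after noting that $(r\odot A)_\alpha=[rA_\alpha^-,rA_\alpha^+]$ (equivalently $(r\odot A)(z)=A(z/r)$), so that $r\odot A$ is again a DoC-MF with $\mathcal{L}^{r\odot A}=\mathcal{L}^A\subset L$ and the theorem's formula instantiates to the displayed expression. Your explicit verification of condition~(ii) of the DoC-MF definition and of the invariance of the attained-level set is exactly the bookkeeping the paper leaves implicit, so there is nothing to correct.
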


   \begin{cor}[Weighted Average DoC-MFs]\label{corol:WA}
       Let $A^1,...,A^n:\R\to[0,1]$ be $n\in\N$ DoC-MFs such that $\sup(A^j)\subseteq\io\pt j=1,...,n$ and consider a finite set $L$ of $\alpha$-levels such that  $\mathcal{L}^{A^{j}}\subset L \pt j=1,...,n $. Additionally, let us consider the weights $w=(w_1,...,w_n)\in\io$ such that $\sum_{j=1}^nw_j=1$. Then, the mapping $\Phi_w(A^1,...,A^n)^L:\R\to\io$ defined by   \begin{equation*}
          \begin{aligned}
            \Phi_w(A^1,...,A^n)^L(x)=\min\left\{\frac{x-\sum_{j=1}^n w_j(A_1^j)^+}{\sum_{j=1}^n w_j(A_2^j)^+-\sum_{j=1}^n w_j(A_1^j)^+},\;\,\frac{x-\sum_{j=1}^n w_j(A_1^j)^-}{\sum_{j=1}^n w_j(A_2^j)^-\sum_{j=1}^n w_j(A_1^j)^-}\right\}\cdot\\(\beta_2^{\bigoplus\limits_{j=1}^n(w_j\odot A^j),L}(x)-\beta_1^{\bigoplus\limits_{j=1}^n(w_j\odot A^j),L}(x))+\beta_1^{\bigoplus\limits_{j=1}^n(w_j\odot A^j),L}(x)
         \end{aligned}
     \end{equation*}
     \noindent $\pt x\in\R$, where $[(A_1^j)^-,(A_1^j)^+]=A^j_{\beta_1^{\bigoplus\limits_{j=1}^n(w_j\odot A^j),L}(x)}$, $[(A_2^j)^-, A_2^j)^+]=A^j_{\beta_2^{\bigoplus\limits_{j=1}^n(w_j\odot A^j),L}(x)}\forall x\in\R$ satisfies $\Phi_w(A^1,...,A^n)^L(x)=\bigoplus\limits_{j=1}^n(w_j\odot A^j)(x)\pt x\in\R$ and $\sup(\Phi_w(A^1,...,A^n)^L)\subseteq\io$.
   \end{cor}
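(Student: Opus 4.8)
The plan is to recognize the displayed formula as the instantiation of Theorem~\ref{theo:charact} at the single fuzzy number $\Phi:=\bigoplus_{j=1}^n(w_j\odot A^j)$, thereby reducing everything to checking that $\Phi$ is a DoC-MF whose non-differentiability levels are captured by $L$. Concretely, I would first use Proposition~\ref{prop:sum-prod-fn}, applied inductively over the $n$ summands (and to each positive scalar product $w_j\odot A^j$), to obtain the $\alpha$-cut expression
\begin{equation*}
\Phi_\alpha=\Big[\,\sum_{j=1}^n w_j (A^j)_\alpha^-,\ \sum_{j=1}^n w_j (A^j)_\alpha^+\,\Big],\qquad \alpha\in(0,1].
\end{equation*}
Substituting these endpoints into the formula of Theorem~\ref{theo:charact} for $A=\Phi$ and any $L\supseteq\mathcal{L}^{\Phi}$ reproduces, term by term, exactly the mapping $\Phi_w(A^1,\dots,A^n)^L$ of the statement, with $\beta^{\Phi,L}=\beta^{\bigoplus_{j=1}^n(w_j\odot A^j),L}$. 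Hence it suffices to prove that $\Phi$ is a DoC-MF, that $\mathcal{L}^{\Phi}\subseteq L$, and that $\sup(\Phi)\subseteq\io$.

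Second, I would verify that $\Phi$ is a fuzzy number: normality follows because each $A^j$ is normal, so $\Phi_1=[\sum_j w_j (A^j)_1^-,\sum_j w_j (A^j)_1^+]$ is a nonempty interval; and each $\Phi_\alpha$ is a bounded closed interval since a nonnegative linear combination of bounded closed intervals is again one. The heart of the argument is the DoC-MF property. By condition (ii) of the definition applied to each $A^j$, the endpoint maps $\alpha\mapsto (A^j)_\alpha^{\pm}$ are piecewise affine in $\alpha$ with breakpoints contained in $\mathcal{L}^{A^j}\subseteq L$. On every gap between two consecutive levels of the finite set $L$, each summand $w_j(A^j)_\alpha^{\pm}$ is therefore affine, so $\Phi_\alpha^{\pm}=\sum_j w_j (A^j)_\alpha^{\pm}$ is affine there as well. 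Consequently $\alpha\mapsto\Phi_\alpha^{\pm}$ is piecewise affine with breakpoints only at levels of $L$; this yields finiteness of $\mathcal{D}^{\Phi}$ (condition (i)), gives condition (ii) by taking $\alpha_d<\alpha_u$ to be the consecutive $L$-levels bracketing $\alpha$, and shows $\mathcal{L}^{\Phi}\subseteq L$ since no kink of $\Phi$ occurs at a membership value outside $L$.

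Third, for the support claim I would evaluate $\Phi_0=[\sum_j w_j (A^j)_0^-,\sum_j w_j (A^j)_0^+]$: since $\sup(A^j)\subseteq\io$ gives $0\le (A^j)_0^-\le (A^j)_0^+\le 1$ and the weights are nonnegative with $\sum_j w_j=1$, both endpoints lie in $\io$, whence $\sup(\Phi)\subseteq\io$. Applying Theorem~\ref{theo:charact} to $\Phi$ then yields $\Phi_w(A^1,\dots,A^n)^L(x)=\Phi(x)=\bigoplus_{j=1}^n(w_j\odot A^j)(x)$ for all $x\in\R$, completing the proof. An alternative route is a direct induction on $n$ that alternately invokes Corollary~\ref{cor:prod-pwlfn} (for each $w_j\odot A^j$) and Corollary~\ref{cor:sum-pwlfn} (to accumulate the partial sums); this also works, but it requires separately checking at each stage that the partial combination remains a DoC-MF with levels inside $L$, which is precisely the piecewise-affineness bookkeeping above.

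I expect the main obstacle to be exactly that bookkeeping in the second step: confirming that forming nonnegative linear combinations of the $\alpha$-cut endpoint functions introduces no non-differentiability at levels outside $L$, together with the degenerate cases (vanishing weights $w_j=0$, or coincident breakpoints where several $L$-levels collapse), which must be treated so that $\mathcal{L}^{\Phi}\subseteq L$ holds without exception.
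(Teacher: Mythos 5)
Your proposal is correct and follows essentially the route the paper intends: this corollary is stated without an in-text proof (it is quoted from \cite{DoC-MF}, introduced as a consequence of Theorem~\ref{theo:charact}), and your reduction --- apply Theorem~\ref{theo:charact} to $\Phi=\bigoplus_{j=1}^n(w_j\odot A^j)$ after using Proposition~\ref{prop:sum-prod-fn} to get $\Phi_\alpha=\bigl[\sum_j w_j(A^j)_\alpha^-,\sum_j w_j(A^j)_\alpha^+\bigr]$ and checking via piecewise affinity of the $\alpha$-cut endpoint maps that $\Phi$ is a DoC-MF with $\mathcal{L}^{\Phi}\subseteq L$ and $\mathrm{supp}(\Phi)\subseteq[0,1]$ --- is exactly the intended argument, as is your alternative induction through Corollaries~\ref{cor:sum-pwlfn} and~\ref{cor:prod-pwlfn}. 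The only point to tidy is that condition (ii) of the DoC-MF definition requires $\alpha_d,\alpha_u$ to be consecutive levels of $\mathcal{L}^{\Phi}$ rather than of $L$; since the endpoint maps are continuous and have no kinks at levels of $L\setminus\mathcal{L}^{\Phi}$, the affine interpolation extends across such levels, so your argument goes through unchanged.
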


\section{DoC-T2MFs: A new way of building type-2 membership functions}\label{sec: DoCMembership}
\noindent This section introduces an innovative co-constructive approach for building type-2 membership function within multi-criteria decision-making framework. The basis of our approach is rooted in measurement theory of the building ratio scale and a revised version of DoC that allows incorporation of expert's hesitations. Before describing the building process of T2MF, we briefly introduce MCDM framework, theoretical background on the construction of memberships from ratio, and DoC for ration scale construction.

\subsection{Multicriteria decision analysis framework}
\noindent The multi-criteria decision-making (MCDM) framework is frequently used \citep{Figueira2005} to systematically address complex decision scenarios involving multiple perspectives, technically referred to as evaluation criteria, for assessing the courses of action under consideration. A typical MCDM scenario can be mathematically described by a collection of alternatives against the criteria $g_j$, represented as \({\cal{A}} = \{a_1, a_2, \ldots, a_m\}\), a finite set of \( n \) criteria, denoted as \({\cal{G}} = \{g_1, g_2, \ldots, g_n\}\), and the performance of the alternatives, expressed as \( g_j: {\cal{A}} \to E_{g_j} \) for each criterion \( g_j \) (\( j = 1, \ldots, n \)), where \( E_j \) represents the scale used to measure criterion \( g_j \). These scales \( E_{g_j} \) \((j=1,2,...,n)\) can be either ordinal or cardinal in nature.

Given this context, we assume that some scales are ordinal in nature, represented by a set of ordered levels or linguistic terms (if they are continuous we can discriteze them). These linguistic terms carry inherent subjectivity and ambiguity of human judgment, making their meaning (semantics) fuzzy by nature. This fuzziness can be modeled using type-1 or type-2 fuzzy sets. Here, we aim to outline the process of constructing one such scale by constructing type-2 membership functions associated with the levels (linguistic terms).

\subsection{Subjective ratio to membership function constructions}
\noindent The construction process follows the concept of subjective ratio estimation from measurement theory, specifically, focusing on the measurement of memberships \citep{Marchant2003}.

Let \( E_g = \{l_{g,1}, \ldots, l_{g,r}, \ldots, l_{g,k_{g}}\} \) be the set of levels used to evaluate a criterion \( g \). The levels in \( E_g \) can be interpreted as fuzzy sets relevant to the assessment of the criterion and the problem under consideration. Let \( X \) be the universal set on which the fuzzy sets of the levels are defined, and let \( X_{l_r} \subset X \) be the support of \( l_r \), i.e., the elements of \( l_r \) have a degree of belongingness to the fuzzy set. 

The subjective estimation of the ratio for the memberships of \( l_r \) can be defined by the mapping \( \rho_{l_r}: X \times X \to \mathbb{R}^+ \) such that \( (x, y) \mapsto \rho_{l_r}(x|y) \) for all \( (x, y) \in X \times X \). Here, the subjective ratio \( \rho_{l_r}(x|y) = b \) is interpreted as: "the ratio of the membership of \( x \) in \( l_r \) to the membership of \( y \) in \( l_r \) is \( b \)." For given subjective ratios for \( X_{l_r} \), the representation of the memberships \( \mu_{l_r} \) can be obtained using the following theorem:
\begin{theorem} \citep{Marchant2003}
Given the subjective ratios \( \rho_{l_r}: X \times X \to \mathbb{R}^+ \), if the following conditions are satisfied:
\begin{enumerate}
\item Reference Independence:  
    \( \rho_{l_r}(x|z) \geqslant \rho_{l_r}(y|z) \iff \rho_{l_r}(x|w) \geqslant \rho_{l_r}(y|w) \) for all \( x, y \in X \) and \( z, w \in X_{l_r} \).
\item Multiplicative Property:  
  \( \rho_{l_r}(x|z) = \rho_{l_r}(x|y) \cdot \rho_{l_r}(y|z) \), for all \( x, y, z \in X \).
\end{enumerate}
Then, there exists a membership function \( \mu_{l_r}: X \to [0,1] \) such that:
\begin{itemize}[label={--}]
\item \( \mu_{l_r}(x) \geqslant \mu_{l_r}(y) \iff \rho_{l_r}(x|z) \geqslant \rho_{l_r}(y|z) \), \( \forall x, y \in X \) and \( z \in X_{l_r} \),
\item \( \frac{\mu_{l_r}(x)}{\mu_{l_r}(y)} = \rho_{l_r}(x|y) \), \( \forall x, y \in X \).
\end{itemize}
Furthermore, for an arbitrary element \( w \in X_{l_r} \), the membership function \( \mu_{l_r} \) is given by:  
\[
\mu_{l_r}(x) = \frac{\rho_{l_r}(x|w)}{\max_{y \in X} \{\rho_{l_r}(y|w)\}}.
\]
\end{theorem}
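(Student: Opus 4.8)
The plan is to treat the displayed formula as the \emph{definition} of $\mu_{l_r}$ and then verify the two listed properties, with the multiplicative property supplying the ratio identity and reference independence supplying the order equivalence. First I would fix an arbitrary reference $w \in X_{l_r}$, set the normaliser $M_w = \max_{y \in X}\{\rho_{l_r}(y|w)\}$, and define $\mu_{l_r}(x) = \rho_{l_r}(x|w)/M_w$. Well-definedness of the codomain is then immediate: since $\rho_{l_r}$ is valued in $\mathbb{R}^+$ the quotient is positive, and as $M_w$ is by construction an upper bound of the numerator, $0 < \mu_{l_r}(x) \leq 1$, so the range $[0,1]$ is respected.

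The ratio identity is the algebraic heart of the argument. For $x,y \in X$ the normalisers cancel, giving $\mu_{l_r}(x)/\mu_{l_r}(y) = \rho_{l_r}(x|w)/\rho_{l_r}(y|w)$, and instantiating the multiplicative property with conditioning element $z=w$ (legitimate since $w \in X_{l_r} \subseteq X$) yields $\rho_{l_r}(x|w) = \rho_{l_r}(x|y)\,\rho_{l_r}(y|w)$, so the quotient collapses exactly to $\rho_{l_r}(x|y)$. The same instantiation with $x=y=z$ forces $\rho_{l_r}(x|x)=1$, confirming that the ratio data are internally consistent.

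For the order equivalence I would first observe that dividing by the positive constant $M_w$ preserves order, so $\mu_{l_r}(x)\geq\mu_{l_r}(y) \iff \rho_{l_r}(x|w)\geq\rho_{l_r}(y|w)$; this settles the equivalence for the single reference $w$. To promote it to an arbitrary $z\in X_{l_r}$ I would invoke Reference Independence, which is precisely the statement that the direction of the comparison between $\rho_{l_r}(x|\cdot)$ and $\rho_{l_r}(y|\cdot)$ is insensitive to the conditioning element inside $X_{l_r}$. (As a sanity check one may note that Reference Independence is in fact already a consequence of the multiplicative property, since $\rho_{l_r}(x|z)=\rho_{l_r}(x|w)\rho_{l_r}(w|z)$ and the common positive factor $\rho_{l_r}(w|z)$ cancels; I would nonetheless cite it directly for clarity.)

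Finally, to legitimise the word ``arbitrary'' attached to $w$, I would check that the construction is reference-invariant: for $w,w'\in X_{l_r}$ the multiplicative property gives $\rho_{l_r}(x|w)=\rho_{l_r}(x|w')\rho_{l_r}(w'|w)$, whence the factor $\rho_{l_r}(w'|w)$ cancels between numerator and normaliser and the two resulting functions coincide. The one genuinely delicate point --- and the step I expect to demand a hypothesis beyond the two purely algebraic conditions --- is the existence of the maximum $M_w$: for infinite $X$ this supremum need not be attained, nor even finite, so I would either restrict to a finite (discretised) support $X_{l_r}$, as the surrounding MCDM setting suggests, or add an explicit attainment/boundedness assumption on $\rho_{l_r}$. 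Everything else reduces to routine verification.
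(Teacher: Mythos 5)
Your proof is correct, but note that the paper itself does not prove this statement: it is imported verbatim, with a citation, from Marchant (2003), so there is no in-paper argument to compare against. Your construction --- define $\mu_{l_r}(x)=\rho_{l_r}(x|w)/M_w$ with $M_w=\max_{y\in X}\{\rho_{l_r}(y|w)\}$, obtain the ratio identity from the multiplicative property instantiated at $z=w$, obtain the order equivalence from positivity of $M_w$ together with Reference Independence, and check reference-invariance via $\rho_{l_r}(x|w)=\rho_{l_r}(x|w')\,\rho_{l_r}(w'|w)$ --- is precisely the standard verification underlying Marchant's representation theorem, so nothing essential is missing. Two of your side remarks deserve emphasis. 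First, you are right that, as the theorem is transcribed in this paper (with multiplicativity quantified over \emph{all} $x,y,z\in X$), Reference Independence is logically redundant: $\rho_{l_r}(x|z)=\rho_{l_r}(x|w)\,\rho_{l_r}(w|z)$ and the common positive factor $\rho_{l_r}(w|z)$ cancels. In Marchant's original formulation the conditioning element is restricted to the support and the axioms play independent roles, which is why both appear here. Second, the attainment of the maximum $M_w$ is indeed an unstated hypothesis: with $\rho_{l_r}$ valued in $\mathbb{R}^+$ and $X$ infinite, the supremum may be unattained or infinite. The paper's actual use of the theorem --- eliciting memberships at finitely many sampled points $x_1,\dots,x_p$ via the deck-of-cards protocol --- implicitly supplies the finiteness you propose to assume, so your caveat is exactly the right repair rather than a gap in your argument.
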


\subsection{New version of Deck of cards for membership function construction}\label{sec: new_DoC}
\noindent Let us consider a fuzzy set $A$ and a set of points, say,  $\{x_1,\dots x_r, \dots, x_p\}$, from the support of the fuzzy set $A$. We will refer to points as alternatives. Our objective is to determine the membership values of each alternative, $A(x_1),\dots A(x_r),\dots, A(x_r), \ldots, A(x_p)$. These values are on a ratio scale and subjective ratios can be elicited from decision-makers or experts. We use a variant of the deck of card method (DCM) by \cite{corrente2021} as follows:
\begin{enumerate}
    \item Let us assume that 
    \begin{itemize}[label={--}]
        \item The alternatives, $x_1,\dots x_r, \dots, x_p$, belong to to the right part of the membership function. 
        \item Alternative $x_1$ belongs to the support of the fuzzy set $A$. Its membership value is a very small positive value, say, $\epsilon$, i.e., $A(x_1) = \epsilon$.
        \item Alternative $x_p$ belongs to the core of the $A$. Thus, $A(x_p)=1$.
        \item All the alternatives $x_1,\dots x_r, \dots, x_p$ have a value in an interval scale, as suggested by \cite{GARCIAZAMORA2024108863}, $v(x_1),\dots,v(x_r),\dots v(x_p)$. These values are within the range $[0,1]$
        \item All the alternatives are on the right of the core of the fuzzy set $A$. We will show how to compute for the alternatives on the right, the procedure for the left is similar.
        %\textcolor{red}{best to assume that $v(x_1)=0$} \textcolor{blue}{Agree}
        %%
        \begin{figure}[ht!]
            \centering
            \includegraphics[width=0.35\linewidth]{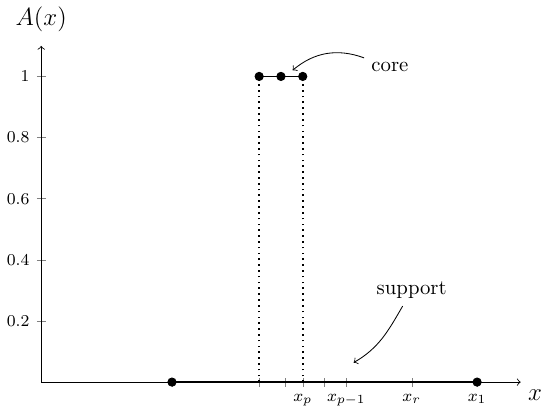}
            \caption{Support and Core of A}
            \label{fig: T2MF membership_support_core}
        \end{figure}
    \end{itemize}
    \item We want to elicit the subjective ratios $\frac{A(x_p)}{A(x_{p-1})}, \frac{A(x_p)}{A(x_{p-2})}, \dots, \frac{A(x_2)}{A(x_1)}$. The comparison of $A(x_1)$ with the rest is insignificant in the sense that $A(x_1)$ belongingness to the fuzzy set is almost negligible. 
    \item %We place blank cards in between the alternatives to model how the decision-makers feels about this difference in the membership values of the alternatives. 
    The analyst provides the decision-maker with a set of blank cards to model how the decision-makers feels about this difference in the membership values of the alternatives. Subsequently, the analyst asks the decision-maker to place the cards between consecutive alternatives based on their subjective judgment of the differences in membership. The decision-maker adds sets of blank cards by hand and represents her/his feelings about the intensity of belief difference between two consecutive alternatives belongingness to the fuzzy set $A$. The analyst then counts the number of cards placed by the decision-maker between each pair of consecutive alternatives. Let \( e_r \) denote the number of cards placed between the consecutive alternatives \( x_{r} \) and \( x_{r+1} \) for \( r = 1, 2, \dots, p-1 \). The memberships' judgment provided by the decision-maker can be represented as the following chain\footnote{Note that this procedure can also be used to determine the non-normalized weights $\bar{w}_h, (h=1,\dots p)$ of the criteria $g_j,\; (h=1,\dots p)$ in the context of determining their relative importance in MCDA/M, mainly in outranking methods, by setting up them as follows: $\bar{w}_p = 0, \quad \bar{w}_{p-1} = \bar{w}_p + (e_{p-1}+1), \quad \dots, \quad \bar{w}_1 = \bar{w}_2 + (e_1 + 1)$. Next, we compute the sum: $W = \sum_{h=1}^{p} \bar{w}_h$.
    Finally, the normalized weights $w_h,(h=1,\dots,p$) are obtained as follows: $w_h = \frac{\bar{w}_h}{W},\; h=1,\dots,p.$ (in this example we assume we do not have \textit{ex \ae quo} criteria, and that criterion $g_p$ is the one ``with no relevance''). Let us remember that $0$ blank cards in between two criteria means the difference between their weights is minimal (the unit), one blank card is twice the unit, and so on. Example: $g_4\;[0]\;g_3\;[2]\;g_2\;[1]\;g_1$, leading to $\bar{w}_4 = 0$, $\bar{w}_3 = 1$, $\bar{w}_2 = 4$, and $\bar{w}_1 = 6$ ($0\;[0+1]\;1\;[2+1]\;4\;[1+1]\; 6$). We have $W= 11$, and the corresponding normalized weights: $w_4 = 0$, $w_3 = 1/11 = 0.091$, $w_2 = 4/11 = 0.364$, and $w_1 = 6/11 = 0.545$, where the sum is equal to $1$.}:

    %\footnote{We can also use this procedure to determine the non-normalized weights of criteria as follows: \bar{w}_p = 0$, $\bar{w}_{p-1} = \bar{w}_p + e_{p-1}$, $\ldots$, $\bar{w}_1 = \bar{w}_1 + e_1$ . We can compute $W=\sum_{h=1}^{p} = \bar{w}_p$, Then, the normalized weights are computed as follows: $w_h = \bar{w}_h/W$.}

    $$\{x_1\}\; [e_1] \; \{x_2\}, \dots, \{x_r\}\; [e_r]\; \{x_{r+1}\} \dots \{x_{p-1}\}\; [e_{p-1}] \{ x_{p} \}$$.

    Further, the decision-maker's membership judgments over the consecutive levels via cards can be presented through a pairwise comparison matrix given in Figure. \ref{fig:blank-cards_general}.
    \begin{center}
    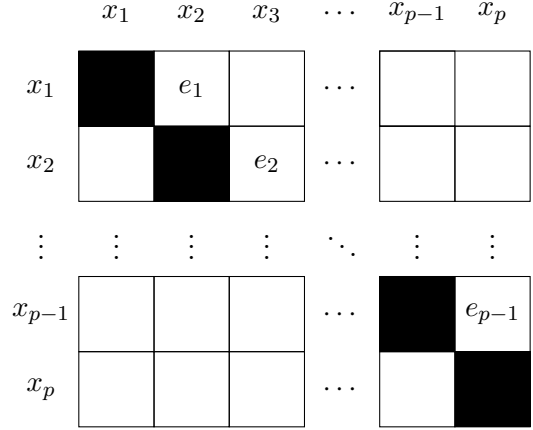
\begin{figure}[ht!]
        \centering
        \begin{tikzpicture}
            % Define cell size
            \def\cellsize{1}
            % Column labels
            \node at (0.5, 0.5) {$x_1$};
            \node at (1.5, 0.5) {$x_2$};
            \node at (2.5, 0.5) {$x_3$};
            \node at (3.5, 0.5) {$\cdots$};
            \node at (4.5, 0.5) {$x_{p-1}$};
            \node at (5.5, 0.5) {$x_p$};
            % Row labels
            \node at (-0.5, -0.5) {$x_1$};
            \node at (-0.5, -1.5) {$x_2$};
            \node at (-0.5, -2.5) {$\vdots$};
            \node at (-0.5, -3.5) {$x_{p-1}$};
            \node at (-0.5, -4.5) {$x_p$};
            % Row 1
            \fill[black] (0, 0) rectangle (1, -1); % Diagonal cell
            \draw (0, 0) rectangle (1, -1); \node[text=white] at (0.5, -0.5) {};
            \draw (1, 0) rectangle (2, -1); \node at (1.5, -0.5) {$e_1$};
            \draw (2, 0) rectangle (3, -1);\node at (2.5, -0.5) {};
           \node at (3.5, -0.5) {$\cdots$};
            \draw (5, 0) rectangle (4, -1); \node at (5.5, -0.5) {};
            \draw (6, 0) rectangle (4, -1); \node at (6.5, -0.5) {};
            % Row 2
            \draw (0, -1) rectangle (1, -2); \node at (0.5, -1.5) {};
            \fill[black] (1, -1) rectangle (2, -2); % Diagonal cell
            \draw (1, -1) rectangle (2, -2); \node[text=white] at (1.5, -1.5) {};
            \draw (2, -1)  rectangle (3, -2); \node at (2.5, -1.5) {$e_2$};
            \node at (3.5, -1.5) {$\cdots$};
            \draw (4, -1) rectangle (5, -2); \node at (5.5, -1.5) {};
            \draw (5, -1) rectangle (6, -2); \node at (6.5, -1.5) {};
            % Row 3
            \node at (0.5, -2.5) {\vdots};
            \node at (1.5, -2.5) {\vdots};
            \node at (2.5, -2.5) {\vdots};
            \node at (3.5, -2.5) {$\ddots$};
            \node at (4.5, -2.5) {\vdots};
            \node at (5.5, -2.5) {\vdots};
            % Row 4
            \draw (0, -3) rectangle (1, -4); \node at (0.5, -3.5) {};
            \draw (1, -3) rectangle (2, -4); \node at (1.5, -3.5) {};
            \draw (2, -3) rectangle (3, -4); \node at (2.5, -3.5) {};
            %\fill[black] (3, -3) rectangle (4, -4); % Diagonal cell
             \node  at (3.5, -3.5) {$\cdots$};
            \fill[black](4,-3) rectangle (5, -4); 
            \draw (4, -3) rectangle (5, -4); \node[text=white] at (4.5, -3.5) {};
            \draw (5, -3) rectangle (6, -4); \node at (5.5, -3.5) {$e_{p-1}$};
        
            % Row 5
            \draw (0, -4) rectangle (1, -5); \node at (0.5, -4.5) {};
            \draw (1, -4) rectangle (2, -5); \node at (1.5, -4.5) {};
            \draw (2, -4) rectangle (3, -5); \node at (2.5, -4.5) {};
            %\fill[black] (3, -3) rectangle (4, -4); % Diagonal cell
             \node  at (3.5, -4.5) {$\cdots$};
            \draw (4, -4) rectangle (5, -5); \node at (4.5, -4.5) {};
            \fill[black] (5, -4) rectangle (6, -5); 
            \draw (5, -4) rectangle (6, -5); \node[text=white] at  (5.5, -4.5) {};
                 
        \end{tikzpicture}
         \caption{Blank Cards}
        \label{fig:blank-cards_general}
    \end{figure}
    \end{center}

    %\textcolor{blue}{in between brackets ....} \textcolor{red}{in Fig. 1??}
    %\textcolor{blue}{We need to add the example} 
    To interpret the decision-maker's memberships elicitation further, let us consider the example in Fig.\ref{fig:blank-cards_example} with three alternatives $\{x_1,x_2,x_3\}$ and decision-maker inserts $1$ card between $x_2$ and $x_1$, and $4$ cards between $x_3$ and $x_2$. The example tells that the difference in the membership values when moving from $x_3$ to $x_1$ is greater than the one from $x_2$ to $x_1$ and this can be modelled by inserting 4 cards and 1 card, in the intervals between the first two pairs of alternative and the second pair.
    \begin{center}
    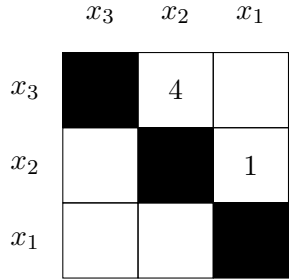
\begin{figure}[ht!]
    \centering
    \begin{tikzpicture}
        % Define cell size
        \def\cellsize{1}
        % Column labels
        \node at (0.5, 0.5) {$x_3$};
        \node at (1.5, 0.5) {$x_2$};
        \node at (2.5, 0.5) {$x_1$};
        
        % Row labels
        \node at (-0.5, -0.5) {$x_3$};
        \node at (-0.5, -1.5) {$x_2$};
        \node at (-0.5, -2.5) {$x_1$};
        
        % Row 1
        \fill[black] (0, 0) rectangle (1, -1); % Diagonal cell
        \draw (0, 0) rectangle (1, -1); \node[text=white] at (0.5, -0.5) {};
        \draw (1, 0) rectangle (2, -1); \node at (1.5, -0.5) {4};
        \draw (2, 0) rectangle (3, -1); \node at (2.5, -0.5) {};
       
        % Row 2
        \draw (0, -1) rectangle (1, -2); \node at (0.5, -1.5) {};
        \fill[black] (1, -1) rectangle (2, -2); % Diagonal cell
        \draw (1, -1) rectangle (2, -2); \node[text=white] at (1.5, -1.5) {};
        \draw (2, -1) rectangle (3, -2); \node at (2.5, -1.5) {1};
       
        % Row 3
        \draw (0, -2) rectangle (1, -3); \node at (0.5, -2.5) {};
        \draw (1, -2) rectangle (2, -3); \node at (1.5, -2.5) {};
        \fill[black] (2, -2) rectangle (3, -3); % Diagonal cell
        \draw (2, -2) rectangle (3, -3); \node[text=white] at (2.5, -2.5) {}; 
       \end{tikzpicture}
      \caption{An example of blank cards insertion}
        \label{fig:blank-cards_example}
    \end{figure}
    \end{center}

    %%%
    \item Compute the non-normalized values. From this information provided by the decision-maker, the non-normalized values of memberships of the alternatives in $A$, $\bar{A}: \{x_1,...,x_p\} \to [0, \infty) $  can be determined from the following relations:
    \[ \bar{A}(x_r) = \bar{A}(x_1) + \sum_{h=1}^{r-1} (e_h+1),\; \text{for}\; r=2, \dots p,\]
    where $\bar{A}(x_1)$, the non-normalized value of the right endpoint of the support is zero, i.e., $\bar{A}(x_1)=0$. 
    \item Compute the ratios. Up to this stage, the decision-maker provides only indirect mmebership judgments information regarding the alternatives' memberships and may not have a clear understanding of how their judgments are represented on the ratio scales. To ``confirm'' that DM's judgments has been adequatly modelled in the ratio scale, we compute the ratios between the membership values and present the following tables to the decision-maker:
    \begin{center}
    \begin{figure}[ht!]
        \centering
        \begin{tikzpicture}[scale=1]
            % Define cell size
            \def\cellsize{2}
            % Column labels
            \node at (0.5, 0.5) {$x_p$};
            \node at (1.5, 0.5) {$x_{p-1}$};
            \node at (2.5, 0.5) {$x_{p-2}$};
            \node at (3.5, 0.5) {$\cdots$};
            \node at (4.5, 0.5) {$x_3$};
            \node at (5.5, 0.5) {$x_2$};
            % Row labels
            \node at (-0.5, -0.5) {$x_p$};
            \node at (-0.5, -1.5) {$x_{p-1}$};
            \node at (-0.5, -2.5) {$\vdots$};
            \node at (-0.5, -3.5) {$x_{3}$};
            \node at (-0.5, -4.5) {$x_2$};
            % Row 1
            \fill[black] (0, 0) rectangle (1, -1); % Diagonal cell
            \draw (0, 0) rectangle (1, -1); \node[text=white] at (0.5, -0.5) {1};
            \draw (1, 0) rectangle (2, -1); \node at (1.5, -0.5) {$a^p_{p-1}$};
            \draw (2, 0) rectangle (3, -1);\node at (2.5, -0.5) {$a^p_{p-2}$};
           \node at (3.5, -0.5) {$\cdots$};
            \draw (5, 0) rectangle (4, -1); \node at (4.5, -0.5) {$a^p_{3}$};
            \draw (6, 0) rectangle (4, -1); \node at (5.5, -0.5) {$a^p_{2}$};
            % Row 2
            \draw (0, -1) rectangle (1, -2); \node at (0.5, -1.5) {};
            \fill[black] (1, -1) rectangle (2, -2); % Diagonal cell
            \draw (1, -1) rectangle (2, -2); \node[text=white] at (1.5, -1.5) {1};
            \draw (2, -1)  rectangle (3, -2); \node at (2.5, -1.5) {$a^{p}_{p-2}$};
            \node at (3.5, -1.5) {$\cdots$};
            \draw (4, -1) rectangle (5, -2); \node at (4.5, -1.5) {$a^{p-1}_{3}$};
            \draw (5, -1) rectangle (6, -2); \node at (5.5, -1.5) {$a^{p-1}_{2}$};
            % Row 3
            \node at (0.5, -2.5) {\vdots};
            \node at (1.5, -2.5) {\vdots};
            \node at (2.5, -2.5) {\vdots};
            \node at (3.5, -2.5) {$\ddots$};
            \node at (4.5, -2.5) {\vdots};
            \node at (5.5, -2.5) {\vdots};
            % Row 4
            \draw (0, -3) rectangle (1, -4); \node at (0.5, -3.5) {};
            \draw (1, -3) rectangle (2, -4); \node at (1.5, -3.5) {};
            \draw (2, -3) rectangle (3, -4); \node at (2.5, -3.5) {};
            %\fill[black] (3, -3) rectangle (4, -4); % Diagonal cell
             \node  at (3.5, -3.5) {$\cdots$};
            \fill[black](4,-3) rectangle (5, -4); 
            \draw (4, -3) rectangle (5, -4); \node[text=white] at (4.5, -3.5) {1};
            \draw (5, -3) rectangle (6, -4); \node at (5.5, -3.5) {$a^3_{2}$};
        
            % Row 5
            \draw (0, -4) rectangle (1, -5); \node at (0.5, -4.5) {};
            \draw (1, -4) rectangle (2, -5); \node at (1.5, -4.5) {};
            \draw (2, -4) rectangle (3, -5); \node at (2.5, -4.5) {};
            %\fill[black] (3, -3) rectangle (4, -4); % Diagonal cell
             \node  at (3.5, -4.5) {$\cdots$};
            \draw (4, -4) rectangle (5, -5); \node at (4.5, -4.5) {};
            \fill[black] (5, -4) rectangle (6, -5); 
            \draw (5, -4) rectangle (6, -5); \node[text=white] at  (5.5, -4.5) {1};      
        \end{tikzpicture}
         \caption{Ratio Table}
        \label{fig:ratio_table_general}
    \end{figure}
    \end{center}
where $a^s_r =\bar{A}(x_s)/ \bar{A}(x_r)$ for all $s,r=p,\dots 2$ and $s<r$.
     Afterward the analyst tries to know the decision-maker's preference on the ratio table in the following way:
     \begin{itemize}
         \item[$-$]  the analyst asks the decision-maker whether she/he feels satisfied with the ratios between membership values of the different alternatives. If the answer is YES. The membership values of the alternatives are computed from non-normalized values.
         \item[$-$] If the answer was ``No!". The analyst asks the DM to modify the ratios for which she/he is not satisfied. Let us assume that DM is not satisfied with the ratio of $x_p$ to $x_2$ and provide the adjusted ratio $\bar{a}^p_2$. We will denote other entries of the modified tables as $\bar{a}^s_r=a^s_r, s,r=p,\dots 2, s<r$ and $(s,r) \neq (p,2)$.
     \end{itemize}
    \item Compute the normalized membership values. The normalized membership values $A: \{x_1,\dots, x_p\} \to [0,1]$ of the alternatives are computed as follows:
    \[A(x) = \frac{\bar{A}(x_r)}{\bar{A}(x_p)},\; \text{for}\; r=1,2,\dots,p.\]
    \item Adjustments. Given the adjusted ratio table, the DM may want to know a new compatible set of cards need to insert between consecutive alternatives. The key idea is to check the membership differences are adequate to the DM judgments. For that purpose, we first compute the approximate revised non-normalized values $\bar{A}^m(x_r),(r=2,\dots p)$ from the following optimization model
    \begin{equation*}
        \begin{split}
        \min & \hskip 5pt \sum_{s,r=2, r<s}^p \vert \bar{A}^m(x_r) - \bar{a}^s_r\bar{A}^m(x_s) \vert \\
        \text{s.t.} & \hskip 5pt\begin{cases}
        \bar{A}^m(x_r) \geqslant 1, \; r=2,...,p\\
        \end{cases}
        \end{split}
    \end{equation*}
    The idea is to obtain the ``best fit'' for the judgments provided by the DMs by deviating minimally from the modified ratio (alternative models could be used leading to difference compatible sets of blank cards). From the modified non-normalized values $\{\bar{A}^m(x_1), \dots \bar{A}^m(x_p)\}$, we determine the number of cards that DM needs to inserts between the consecutive alternatives $\{x_1,\dots, x_p\}$, i.e., the chain, 
    \[ \{x_1\} \; [\bar{e}_1] \{x_2\} \; [\bar{e}_2] \dots [\bar{e}_{p-1}]\; \{x_p\}\]
    The problem could be viewed as an inverse problem of Deck of cards, i.e., assuming a sequence generated by the deck of cards we determine the number of blank cards the decision-maker could put between consecutive levels. Further, the exact new sequence of cards $\{\bar{e}_1,\dots \bar{e}_p\}$ could be found by solving the following integer linear programming problem
    \begin{equation*}
        \begin{split}
            \text{min} & \hskip 5pt \sum_{r=1}^{p-1}\left\vert \frac{(\bar{A}^m(x_{r})-\bar{A}^m(x_{r+1}))h}{\bar{A}^m(x_{2})(z-1)}-(\bar{e}_{r}+1)\right\vert \\
            \text{s.t.} & \hskip 5pt\begin{cases}
            \sum_{r=1}^{p-1}(\bar{e}_r+1)=h\\
            \bar{e}_r \in \mathbb{N}\cup \{0\}, r=1,\cdots, p-1\\
            h \in \mathbb{N}
            \end{cases}
        \end{split}
    \end{equation*}
    %\textcolor{red}{we need some argument, why do we need to find the cards} \textcolor{blue}{To check the membership differences are adequate to the DM judgments}

    %\textcolor{red}{We don't have preference judgments, but membership judgments}
 
    \item Final. The analyst presents the decision maker's the new ratio table obtained based on the modified non-normalized values $\{\bar{A}^m(x_1), \dots, \bar{A}^m(x_p)\}$ and asked the DM's whether she/he is satisfied of it or not
    \begin{itemize}
        \item[-] If the DM's answer is YES!  The membership values of alternatives in $A$ is computed by normalizing $\{\bar{A}^m(x_1), \dots, \bar{A}^m(x_p)\}$, i.e.,  \[A(x_r) = \frac{\bar{A}^m(x_r)}{\bar{A}^m(x_p)},\; \text{for}\; r=1,2,\dots,p.\]
        \item[-] If DM's answer is NO! The analyst will repeat the process from Step 5 to elicit the DM's preference.
    \end{itemize}
    In this way, the analyst could elicit the DM's membership judgments and construct the right-hand membership values of the alternatives $\{x_1, \dots, x_p \}$ in the fuzzy concept $A$ (Figure \ref{fig: T1MF membership}). Similarly, left-hand membership function of the fuzzy concepts $A$ can be elicited through the interaction of decision analyst and decision-maker. 
     \begin{figure}[htpb!]
        \centering
        \includegraphics[width=0.4\linewidth]{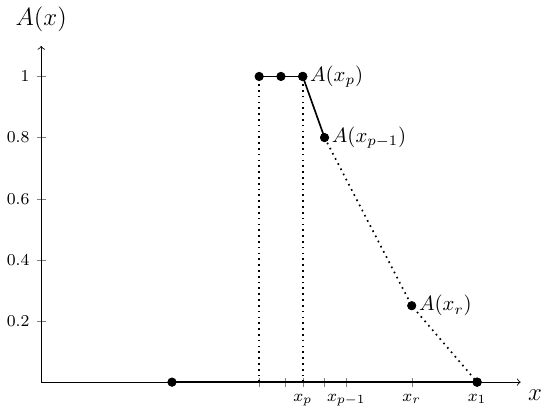}
        \caption{Right-hand membership function in ratio scale}
        \label{fig: T1MF membership}
    \end{figure} 
\end{enumerate}

\subsection{Construction of DoC based IT2MF}
\noindent  In this section, we develop the three-step methodology to construct IT2MF based on our membership function construction method proposed in Section \ref{sec: new_DoC}. Specifically, we construct a linguistic scale in which each linguistic label has a fuzzy meaning and is represented by an IT2MF. The construction process is guided by a decision analyst and starts with the transformation of the linguistic labels into a value scale through a co-constructive DoC method. Based on these the core and support of IT2MF representation of different labels are identified in the second step. In the final step, we capture the DM's hesitation in membership degree allocation by allowing imprecision in providing the blank cards and constructing the left and right-hand parts of IT2MF.

\subsubsection{Value function construction for labels}
\noindent  Following the information articulation protocol of DoC method described in Section \ref{sec: new_DoC}, we build a value function for the linguistic labels using a version of the DoC illustrated by \cite{GARCIAZAMORA2024108863}. Let \( E_g = \{l_{g,1}, \ldots, l_{g,r}, \ldots, l_{g,k_{g}}\} \) be the labels of linguistic scale such that $l_{g,1}\; \prec, \ldots \prec \; l_{g,r} \prec \; \ldots \prec \; l_{g,k_{g}}$. Based on the information of blank cards provided by DM's between labels
\[l_{g,1}\; [e_1] \; l_{g,2}\ldots  \; l_{g,r-1} \; [e_{r}]  \; l_{g,r}\; \ldots \;  [e_{k_g -1}] \;l_{g,k_{g}} \]
and values of the labels $v_g: E_g \to [0,1]$, with the assumptions on the worst and best labels values $v_g(l_{g,1})=0$ and $v_g(l_{g,k_{g}})=1$, we obtain value of the linguistic labels as follows:

% \vspace{0.5cm}

% \textcolor{red}{José: Somewhere in this section, we can say that from a set of values we can find the number of cards and refer to an annex where we can add the method by Diego.}

\begin{equation*}
        v_g(l_{g,p}) = v_g(l_{g,1}) + \gamma \sum_{h=1}^{p-1} (e_h+1), \; p=2,\dots, k_g-1
    \end{equation*}
where $\gamma$ the value of a blank card. It can be computed as  $\gamma = \frac{v_g(l_{g,k_g})-v_g(l_{g,1})}{r}$, with $r$ being the total number of blank cards, $r =\sum_{h=1}^{k_g-1}(e_h+1)$ that are placed between the worst linguistic labels and best linguistic labels. Such a value function $v_g:E_g \to [0,1]$ is depicted in Fig. \ref{fig: value_function_labels}.
 \begin{figure}[htpb!]
        \centering
        \includegraphics[width=0.5\linewidth]{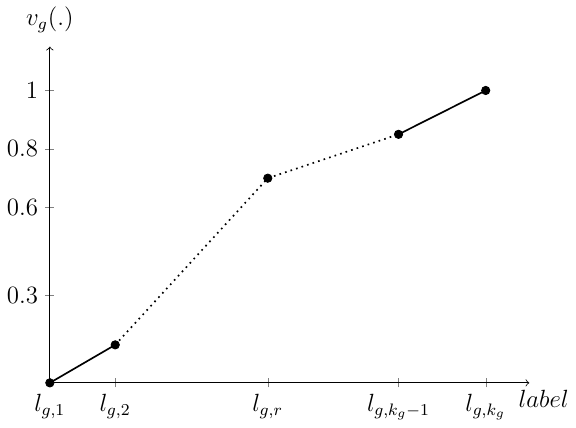}
        \caption{Value function for label of $E_g$}
        \label{fig: value_function_labels}
    \end{figure}
It is evident from Figure \ref{fig: value_function_labels} that DoC method approximates the value function in the mind of DM through a continuous piecewise linear function. In fact, DoC method can approximate any arbitrary non-linear function by a piecewise continuous linear function with a desirable error bound controlled by the total number of blank cards used in the preference elicitation process and that has been demonstrated in Theorem \ref{th: num_to_Card} (see Appendix).
\subsubsection{Identifying core and support of IT2MF}
\noindent The core of the IT2MF indicates an interval on the domain of the membership function where DM's has full confidence (no uncertainty, i.e., membership grade is $1$) in stating that values in the region represent the linguistic label. On the other hand, support of the IT2MF is an interval where DM's feels that he/she has somewhat confidence. Further, these can be identified through the co-constructive process of identifying the threshold proposed in \cite{GARCIAZAMORA2024108863}, where the analyst asks the DM's about his/her confidence by presenting a value and keeps on changing based on the provided answers. Suppose that core and support of one such linguistic level $l_{g,r}$ from $E_g$ is constructed by following the above process and depicted in Fig. \ref{fig: core_support_l_r}.
 \begin{figure}[htpb!]
        \centering \includegraphics[width=0.4\linewidth]{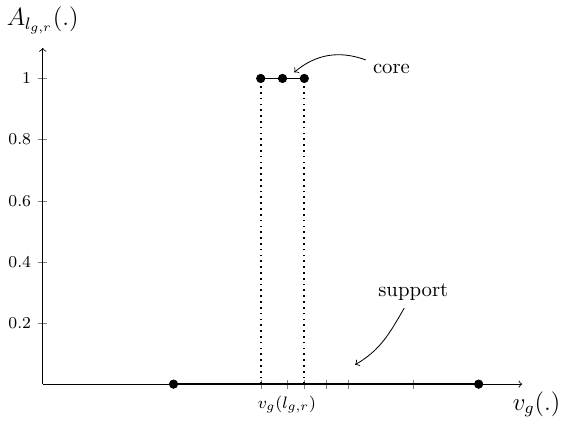}
        \caption{Core and support of the fuzzy concept $l_{g,r}$}
        \label{fig: core_support_l_r}
    \end{figure}
\subsubsection{Identify left-right upper and lower membership function}
\noindent  Now, we attempt to construct the left-right upper and lower membership functions part of IT2MF $A_{l_{g,r}} = (\underline{A}_{l_{g,r}}, \bar{A}_{l_{g,r}})$ by coupling the idea of constructing T1MF proposed in Section 3.1 with the interpersonal uncertainty appears in the human subjective judgments. The construction phase consists of two steps. In the first step, we attempt to capture the uncertainty in the subjective judgment of DM's that subsequently leads to allocating several membership degrees for elements in the support, which could also be viewed as several T1MFs generated from DM's interpersonal uncertainty. In the second step, IT2MF is constructed from these T1MFs using some simple aggregation rule. We will describe the construction of the right-hand upper and lower membership functions by considering a similar set-up in T1MF, i.e., we want to determine the imprecise membership degrees of a set of points $\{x_1,..,x_r,...,x_p\}$ at the right hand of the core of $A_{l_{g,r}}$. The steps are described as follows:
\begin{itemize}[label={--}]
  \item \emph{Construction of T1MFs from hesitation}. In T1MF construction, we have considered the DM's subjective judgment via a set of blank cards to obtain the membership values of the alternatives to the fuzzy concept $A$. Such subjective judgments could model the precise membership value of the fuzzy set (T1MF).
  %So far, we have considered the DM's subjective judgment to obtain the membership values of the alternatives to the fuzzy set $A$. Such subjective judgments could model the precise membership value of the fuzzy set (type-1 fuzzy set). 
  However, such precise judgments in the form of blank cards may not be easy to elicit when DM has intrapersonal uncertainty/hesitation. In such cases, DM cannot provide the precise number of blank cards included between two consecutive alternatives, say $x_r$ and $x_{r+1}$. In these scenarios, DM prefers to express the uncertainty associated with the number of blank cards by an interval of possible values. Let $e_r = [e_r^l, e_r^u], (e_r^l \leqslant e_r^u)$ be the uncertain interval that represents the DM imprecise judgments in the number of blank cards between the alternatives $x_r$ and $x_{r+1}$. Note that DM uses such interval estimation, where they hesitate to insert blank cards, not necessarily between all the consecutive alternatives. To understand the impact of inserting interval estimation of blank cards between levels, let's assume that the DM insert an interval estimation of the blank cards between the alternatives $x_{p-1}$ and $x_p$, i.e., $e_{p-1} = [e_{p-1}^l, e_{p-1}^u]$ and subsequently, the chain of the alternatives and DM's membership judgments in the form of blank cards can be represented as the following chain:
     $$\{x_1\}\; [e_1] \; \{x_2\}, \dots, \{x_r\}\; [e_r]\; \{x_{r+1}\} \dots \{x_{p-1}\}\; [e_{p-1}^l,e_{p-1}^u] \{ x_{p} \} $$

     This asserts that DM cannot precisely determine the difference between the belonging of alternatives $x_{p-1}$ and $x_p$ the fuzzy set $A$ but she/he can provide an interval estimation $[e_{p-1}^l,e_{p-1}^u]$ on the number of blank cards. In fact, the number of blank cards could be any integer from $[e_{p-1}^l,e_{p-1}^u]$. Thus, we obtain the following set of chains of the alternatives and blank cards:
     \[\mathcal{C}=\{\{x_1\}\; [e_1] \; \{x_2\}, \dots, \{x_r\}\; [e_r]\; \{x_{r+1}\} \dots \{x_{p-1}\}\; [e_{p-1}] \{ x_{p} \} : e_{p-1} \in [e_{p-1}^l,e_{p-1}^u]\}.
     \]

    Further, for each chain from $\mathcal{C}$, we can obtain membership values of alternatives in the fuzzy set $A$ by following the similar process described in Steps 2-8. Subsequently, we obtain the different values of the membership for each alternative $x_k, (k=1,\dots p)$ in the fuzzy set $A$. Thus, the imprecise membership judgments of DM's via interval estimation of the number of cards
    propagate the uncertainty into the primary membership values of the alternatives and eventually, we obtain several representations of the right-hand membership function for the fuzzy set $A$ depicted in Fig. \ref{fig: T2MF membership construction_1}.  
     \begin{figure}[htpb!]
        \centering
        \includegraphics[width=0.5\linewidth]{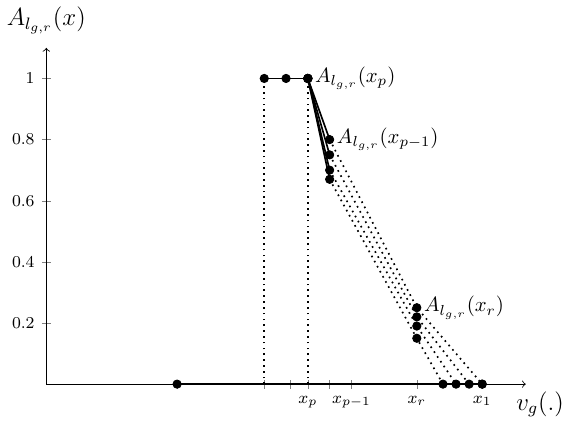}
        \caption{Right-hand membership function construction from uncertainty into the number of blank cards}
        \label{fig: T2MF membership construction_1}
    \end{figure} 
    \item \emph{Construction of IT2MF from T1MFs}: We observe that hesitation in giving blank cards leads to uncertainty in the membership values allocation for each alternative. In fact, for each alternative $x_k$, we obtain a set of membership values for $A_{l_{g,r}}(x_k)$ that characterize the uncertainty in allocating the membership values through imprecise blank cards. The upper and lower bounds of this set of membership values provide the DM's uncertainty about the membership judgments for the alternative $x_k$'s belongingness, i.e., the membership value of the alternative $x_k$ is not a precise but an interval that represents DM's uncertainty in the belongings of $x_k$ in the fuzzy concept $A_{l_{g,r}}$. Considering this fact for all the alternatives, we obtain the right-hand upper ($\underline{A}_{l_{g,r}}$) and lower ($\overline{A}_{l_{g,r}}$) membership functions for the fuzzy set $A$ as depicted in Fig. \ref{fig: T2MF membership construction_2}. In this way, we can elicit the upper and lower right-hand membership functions of IT2FS. Similarly, the left-hand side of the IT2FS could be elicited by the DM's and subsequently the IT2MF is constructed and depicted as in Fig. \ref{fig: T2MF membership construction_3}. 
    
    \begin{figure}[htpb!]
     \begin{subfigure}[t]{0.48\textwidth}
        \centering
        \includegraphics[width=1\linewidth]{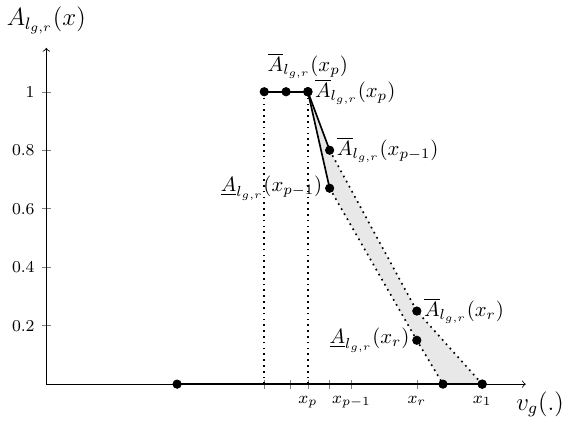}
        \caption{Right-hand upper and lower membership functions construction from uncertainty into the number of blank cards}
        \label{fig: T2MF membership construction_2}
    \end{subfigure} 
    \begin{subfigure}[t]{0.48\textwidth}
        \centering
        \includegraphics[width=1\linewidth]{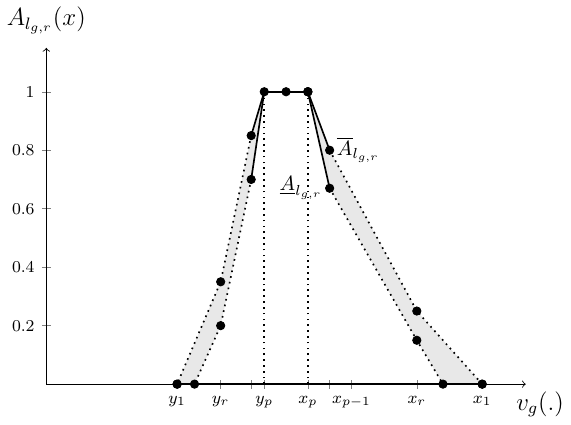}
        \caption{IT2MF constructed using modified DoC method}
        \label{fig: T2MF membership construction_3}
    \end{subfigure} 
    \caption{DoC-IT2MF from DoC-T1MFs}
    \end{figure}
    
\end{itemize}

\section{Aggregations and ordering rules for DoC-IT2MF }\label{sec: Aggregation Functions}
\noindent  In this section, we first introduce the representation of DoC-IT2MF using DoC-T1MF as a foundation. Building on this, we develop an aggregation rule for DoC-IT2MFs and establish an ordering principle for these fuzzy quantities.  
\begin{definition}[Interval Type 2 Fuzzy number]
    An interval type 2 fuzzy nuber is a mapping $A=(\underline{A},\overline{A}):\R\to\ci$ such that:
    \begin{enumerate}
        \item $\underline{A},\overline{A}$ are fuzzy numbers.
        \item $\underline{A}(x)\leqslant\overline{A}(x)\pt x\in \R$
    \end{enumerate}
\end{definition}
Note that $\underline{A}_1\subseteq\overline{A}_1$. 

\begin{definition}[DoC-IT2MF] 
    A DoC-IT2MF is a mapping $A=(\underline{A},\overline{A}):\R\to\ci$ such that:
    \begin{enumerate}
        \item $\underline{A},\overline{A}$ are DoC-MFs
        \item $\underline{A}(x)\leqslant\overline{A}(x)\pt x\in \R$
    \end{enumerate}
\end{definition}
Note that any DoC-IT2MF is a Type 2 fuzzy set. $\cd_2$ is the set of all the DoC-IT2MFs on $\io$.
\begin{definition}
    Let $A=(\underline{A},\overline{A})$. Then, given $\alpha\in\iol$, we can define its $\alpha$-cut by $A_\alpha=(\underline{A}_\alpha,\overline{A}_\alpha)$. Of course, $\underline{A}_\alpha\subseteq\overline{A}_\alpha \pt\alpha\in\iol$
\end{definition}

\begin{theorem}\label{theo:Sumprodt2}
    Let $A=(\underline{A},\overline{A})$, $B=(\underline{B},\overline{B})\in\cff$, $\lambda>0$. Then, $A\oplus B=(\underline{A}\oplus\underline{B},\overline{A}\oplus\overline{B})$ and $\lambda\odot A=(\lambda\odot\underline{A},\lambda\odot\overline{A})$ are IT2FN. Furhtermore, for $\alpha\in\iol$, $(A\oplus B)_\alpha=(\underline{A}_\alpha+\underline{B}_\alpha,\overline{A}_\alpha+\overline{B}_\alpha)$ and $(\lambda\odot A)_\alpha=(\lambda\underline{A}_\alpha,\lambda\overline{A}_\alpha)$.
\end{theorem}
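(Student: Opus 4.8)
The plan is to reduce everything to the $\alpha$-cut representation of fuzzy arithmetic in Proposition \ref{prop:sum-prod-fn}, exploiting the fact that for fuzzy numbers the pointwise domination $\underline{A}(x)\leqslant\overline{A}(x)$ is equivalent to the nesting of $\alpha$-cuts $\underline{A}_\alpha\subseteq\overline{A}_\alpha$ for every $\alpha\in\iol$. I would first record this equivalence together with the endpoint translation $\underline{A}_\alpha\subseteq\overline{A}_\alpha \iff \overline{A}_\alpha^-\leqslant\underline{A}_\alpha^-$ and $\underline{A}_\alpha^+\leqslant\overline{A}_\alpha^+$, and likewise for $B$, since these endpoint inequalities are what drive the rest of the argument.

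For the sum, I would begin by checking that $\underline{A}\oplus\underline{B}$ and $\overline{A}\oplus\overline{B}$ are fuzzy numbers: by Proposition \ref{prop:sum-prod-fn} each $\alpha$-cut is the bounded closed interval $[\underline{A}_\alpha^-+\underline{B}_\alpha^-,\underline{A}_\alpha^++\underline{B}_\alpha^+]$, and normality is inherited because adding a core point of $\underline{A}$ to a core point of $\underline{B}$ yields a point of membership $1$. Next I would verify the ordering condition $(\underline{A}\oplus\underline{B})(x)\leqslant(\overline{A}\oplus\overline{B})(x)$. The cleanest route is through $\alpha$-cuts: from $\overline{A}_\alpha^-\leqslant\underline{A}_\alpha^-$, $\overline{B}_\alpha^-\leqslant\underline{B}_\alpha^-$ and the dual inequalities $\underline{A}_\alpha^+\leqslant\overline{A}_\alpha^+$, $\underline{B}_\alpha^+\leqslant\overline{B}_\alpha^+$, Proposition \ref{prop:sum-prod-fn} gives $(\overline{A}\oplus\overline{B})_\alpha^-\leqslant(\underline{A}\oplus\underline{B})_\alpha^-$ and $(\underline{A}\oplus\underline{B})_\alpha^+\leqslant(\overline{A}\oplus\overline{B})_\alpha^+$, i.e. $(\underline{A}\oplus\underline{B})_\alpha\subseteq(\overline{A}\oplus\overline{B})_\alpha$ for all $\alpha$, which is precisely the required domination. (Alternatively, monotonicity of $\min$ and $\sup$ in the defining sup--min convolution of Definition \ref{def:aggregation_fn} yields it directly.) This establishes that $(\underline{A}\oplus\underline{B},\overline{A}\oplus\overline{B})$ satisfies both defining conditions of an IT2FN, and the formula $(A\oplus B)_\alpha=(\underline{A}_\alpha+\underline{B}_\alpha,\overline{A}_\alpha+\overline{B}_\alpha)$ is then immediate from the $\alpha$-cut definition $A_\alpha=(\underline{A}_\alpha,\overline{A}_\alpha)$ combined with Proposition \ref{prop:sum-prod-fn}.

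For the scalar multiple $\lambda\odot A$ with $\lambda>0$, the argument is parallel, but I would first record that the product $\alpha$-cut of Proposition \ref{prop:sum-prod-fn}, applied with the degenerate crisp number $\lambda$ (so both its endpoints equal $\lambda$), collapses: since $\lambda>0$ and $A_\alpha^-\leqslant A_\alpha^+$, the four products reduce to $\min\{\lambda A_\alpha^-,\lambda A_\alpha^+\}=\lambda A_\alpha^-$ and $\max\{\lambda A_\alpha^-,\lambda A_\alpha^+\}=\lambda A_\alpha^+$, giving $(\lambda\odot A)_\alpha=[\lambda A_\alpha^-,\lambda A_\alpha^+]$. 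Positivity of $\lambda$ then preserves boundedness and closedness (so $\lambda\odot\underline{A}$ and $\lambda\odot\overline{A}$ are fuzzy numbers) and preserves nesting, $\underline{A}_\alpha\subseteq\overline{A}_\alpha\Rightarrow[\lambda\underline{A}_\alpha^-,\lambda\underline{A}_\alpha^+]\subseteq[\lambda\overline{A}_\alpha^-,\lambda\overline{A}_\alpha^+]$, which yields the IT2FN property and the formula $(\lambda\odot A)_\alpha=(\lambda\underline{A}_\alpha,\lambda\overline{A}_\alpha)$.

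The step I expect to require the most care, and that I would flag as the crux, is justifying the componentwise decomposition $A\oplus B=(\underline{A}\oplus\underline{B},\overline{A}\oplus\overline{B})$ itself, namely that the type-2 operation on $A$ and $B$ produces no interaction between the lower and upper envelopes. Here I would appeal to the structure of the extension principle on the secondary grades (equivalently, to the representation of an IT2FN through its embedded lower and upper T1 fuzzy numbers): because $\min$ and $\sup$ act monotonically and separately on the two envelopes, the convolution splits into the two independent convolutions, and the hypothesis $\lambda>0$ keeps the scaling orientation-preserving so that no swap between the lower and upper bounds occurs. Everything else then follows from Proposition \ref{prop:sum-prod-fn} and the equivalence between pointwise domination and $\alpha$-cut nesting.
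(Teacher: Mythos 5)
Your proof is correct, but it takes the scenic route where the paper's argument is two lines long. The paper simply notes that closure of fuzzy numbers under $\oplus$ and $\odot$ makes the four component functions fuzzy numbers, and then obtains the pointwise domination $(\underline{A}\oplus\underline{B})(z)=\sup_{z=x+y}\min\{\underline{A}(x),\underline{B}(y)\}\leqslant\sup_{z=x+y}\min\{\overline{A}(x),\overline{B}(y)\}=(\overline{A}\oplus\overline{B})(z)$ directly from monotonicity of $\sup$ and $\min$ — exactly the alternative you mention in parentheses and then set aside. Your main route instead translates domination into nesting of $\alpha$-cuts and pushes the endpoint inequalities through Proposition \ref{prop:sum-prod-fn}; this is equally valid, and it has the modest advantage of making the $\alpha$-cut formulas $(A\oplus B)_\alpha=(\underline{A}_\alpha+\underline{B}_\alpha,\overline{A}_\alpha+\overline{B}_\alpha)$ and $(\lambda\odot A)_\alpha=(\lambda\underline{A}_\alpha,\lambda\overline{A}_\alpha)$ fully explicit (including the collapse of the product formula for a crisp $\lambda>0$), whereas the paper dispatches them with the remark that the convolution argument ``also determines the shape of the $\alpha$-cuts.'' Your preliminary equivalence between pointwise domination and $\alpha$-cut nesting is correct and cleanly stated.

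One correction of emphasis: the step you flag as the crux — justifying that the type-2 operation decomposes componentwise, with no interaction between the lower and upper envelopes — is not a step at all in the paper, because $A\oplus B=(\underline{A}\oplus\underline{B},\overline{A}\oplus\overline{B})$ and $\lambda\odot A=(\lambda\odot\underline{A},\lambda\odot\overline{A})$ are being \emph{defined} by these componentwise formulas; the theorem's content is only that the resulting pairs are IT2FNs and what their $\alpha$-cuts look like. There is no prior type-2 convolution on $\cff$ from which the decomposition must be derived, so your appeal to ``the structure of the extension principle on the secondary grades'' is answering a question the statement does not ask — and, had it been asked, that gesture would be too vague to count as a proof. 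Drop that paragraph; the rest of your argument stands on its own.
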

\begin{proof}
    Fuzzy arithmetic guarantees that $\underline{A}\oplus\underline{B},\overline{A}\oplus\overline{B}$ and $\lambda\odot\underline{A},\lambda\odot\overline{A}$ are fuzzy numbers. Now, note that
    \begin{equation*}
        \begin{split}
        \underline{A}\oplus\underline{B}(z)&=\sup_{z=x+y}\min\{\underline{A}(x),\underline{B}(y)\}\leqslant\sup_{z=x+y}\min\{\overline{A}(x),\overline{B}(y)\}=\overline{A}\oplus\overline{B}(z)\\
        \lambda\odot\underline{A}(z)&=\sup_{z=\lambda x}\underline{A}(x)\leqslant\sup_{z=\lambda x}\overline{A}(x)=       \lambda\odot\overline{A}
        \end{split}
    \end{equation*}
    for any $z\in\io$, which implies that both $A\oplus B$ and $\lambda\odot A$ are IT2FNs and also determines the shape of the $\alpha$-cuts. 
\end{proof}

Taking this into account, we can derive some computational rules for DoC-IT2MFs. For a DoC-IT2MF $A=(\underline{A},\overline{A}):\R\to\ci$, we can consider the points in which $\underline{A}$ and $\overline{A}$ are not differentiable, namely $\mathcal{D}^{\underline{A}}=\{x\in\R\tq \nexists \underline{A}^\prime(x)\}$ and $\mathcal{D}^{\overline{A}}=\{x\in\R\tq \nexists \overline{A}^\prime(x)\}$. In addition, we consider the respective image sets $\mathcal{L}^{\underline{A}}=\{\lim_{x\to x_0^+} \underline{A}(x),\lim_{x\to x_0^-}\underline{A}(x), \underline{A}(x_0)\tq x_0\in\mathcal{D}^{\underline{A}}\}$ and $\mathcal{L}^{\overline{A}}=\{\lim_{x\to x_0^+} \overline{A}(x),\lim_{x\to x_0^-}\overline{A}(x), \overline{A}(x_0)\tq x_0\in\mathcal{D}^{\overline{A}}\}$. Furhtermore, we can consider the mappings $\beta^{\underline{A},L},\beta^{\overline{A},L}:\R\to L^2$ defined by:
\begin{gather*}
    \beta^{\underline{A},L}(x)=\begin{cases}
        (0,0)&\text{ if }x\notin \underline{A}_0\\
        (1,1)&\text{ if }x\in \underline{A}_1\\
        (\max\limits_{x\in \underline{A}_\alpha}\{\alpha\in L \},\min\limits_{x\notin \underline{A}_\alpha}\{\alpha\in L\})&\text{ if }x\in \underline{A}_0\setminus \underline{A}_1\\
    \end{cases},\\
     \beta^{\overline{A},L}(x)=\begin{cases}
        (0,0)&\text{ if }x\notin \overline{A}_0\\
        (1,1)&\text{ if }x\in \overline{A}_1\\
        (\max\limits_{x\in \overline{A}_\alpha}\{\alpha\in L \},\min\limits_{x\notin \overline{A}_\alpha}\{\alpha\in L\})&\text{ if }x\in \overline{A}_0\setminus \overline{A}_1\\
    \end{cases},
\end{gather*}
with the notation $\beta^{\underline{A},L}(x)=(\beta_1^{\underline{A},L}(x),\beta_2^{\underline{A},L}(x)), \beta^{\overline{A},L}(x)=(\beta_1^{\overline{A},L}(x),\beta_2^{\overline{A},L}(x))\pt x\in\R$. On this basis, we may state the analogous result for DoC-IT2MFs of Theorem \ref{theo:charact}.

\begin{theorem}\label{theo:charcT2}
    Let $A=(\underline{A},\overline{A}): \mathbb{R} \to \ci$ be a DoC-IT2MF, and consider a finite set $L$ of $\alpha$-levels such that $\cl^{\underline{A}}, \cl^{\overline{A}}\subseteq L$. Then, the mappings $\underline{A}_L,\overline{A}_L: \mathbb{R} \to [0, 1]$ defined by
   \begin{gather*}
       \underline{A}_L(x) = \min \left\{ \frac{x - \underline{A}^+_1}{\underline{A}^+_2 - \underline{A}^+_1}, \frac{x - \underline{A}^-_1}{\underline{A}^-_2 - \underline{A}^-_1} \right\} (\beta^{\underline{A},L}_2(x) - \beta^{\underline{A},L}_1(x)) + \beta^{\underline{A},L}_1(x),\\
       \overline{A}_L(x) = \min \left\{ \frac{x - \overline{A}^+_1}{\overline{A}^+_2 - \overline{A}^+_1}, \frac{x - \overline{A}^-_1}{\overline{A}^-_2 - \overline{A}^-_1} \right\} (\beta^{\overline{A},L}_2(x) - \beta^{\overline{A},L}_1(x)) + \beta^{\overline{A},L}_1(x)
   \end{gather*}
    where $[\underline{A}^-_1, \underline{A}^+_1] = \underline{A}_{\beta^{\underline{A},L}_1(x)}$, $[\underline{A}^-_2, \underline{A}^+_2] = \underline{A}_{\beta^{\underline{A},L}_2(x)}$,   $[\overline{A}^-_1, \overline{A}^+_1] = \overline{A}_{\beta^{\overline{A},L}_1(x)}$, $[\overline{A}^-_2, \overline{A}^+_2] = \overline{A}_{\beta^{\overline{A},L}_2(x)}$, for all $x \in \mathbb{R}$, satisfy $(\underline{A}_L,\overline{A}_L)(x) = A(x)$ for all $x \in \mathbb{R}$.
\end{theorem}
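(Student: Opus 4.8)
The plan is to reduce the statement to two independent invocations of Theorem \ref{theo:charact}, one for the lower membership function $\underline{A}$ and one for the upper membership function $\overline{A}$. The essential observation is that the problem decouples completely: the expression defining $\underline{A}_L$ involves only the component $\underline{A}$, together with its non-differentiability set $\mathcal{D}^{\underline{A}}$, its image set $\cl^{\underline{A}}$, and the map $\beta^{\underline{A},L}$; symmetrically, the expression for $\overline{A}_L$ involves only the analogous data attached to $\overline{A}$. Since neither formula mixes the two components, each can be treated entirely on its own.

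First I would note that, by the very definition of a DoC-IT2MF, both $\underline{A}$ and $\overline{A}$ are DoC-MFs. The hypothesis $\cl^{\underline{A}}\subseteq L$ then guarantees that the finite set $L$ is an admissible level set for $\underline{A}$ in the precise sense required by Theorem \ref{theo:charact}. Next I would identify the map $\beta^{\underline{A},L}$ introduced above with the map $\beta^{A,L}$ of the earlier single-component construction under the substitution $A=\underline{A}$, and verify that the bracketed $\alpha$-cuts $[\underline{A}^-_1,\underline{A}^+_1]=\underline{A}_{\beta^{\underline{A},L}_1(x)}$ and $[\underline{A}^-_2,\underline{A}^+_2]=\underline{A}_{\beta^{\underline{A},L}_2(x)}$ instantiate exactly the quantities $[A^-_1,A^+_1]$ and $[A^-_2,A^+_2]$ appearing there. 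Consequently the formula defining $\underline{A}_L$ is nothing other than $A^L$ evaluated at $A=\underline{A}$, and Theorem \ref{theo:charact} immediately yields $\underline{A}_L(x)=\underline{A}(x)$ for all $x\in\R$.

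Applying the identical argument to the component $\overline{A}$, now using the hypothesis $\cl^{\overline{A}}\subseteq L$, I would conclude $\overline{A}_L(x)=\overline{A}(x)$ for all $x\in\R$. Pairing the two equalities then gives $(\underline{A}_L,\overline{A}_L)(x)=(\underline{A}(x),\overline{A}(x))=A(x)$ for every $x\in\R$, which is precisely the asserted identity of mappings into $\ci$.

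I do not expect any genuine analytic difficulty here, since all the substantive content was already established in Theorem \ref{theo:charact}; the present result is a direct componentwise transfer. The one point that deserves care—and thus the main, essentially sole, obstacle—is the bookkeeping step of checking that the hypotheses of Theorem \ref{theo:charact} are satisfied \emph{separately} by each component. In particular one must observe that the joint inclusion $\cl^{\underline{A}},\cl^{\overline{A}}\subseteq L$ furnishes a single common finite level set that is simultaneously admissible for both DoC-MFs, and that the $\beta$-maps and $\alpha$-cut notation for the upper and lower parts match the single-component setting exactly. This verification is purely notational rather than mathematical.
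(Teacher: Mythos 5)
Your proposal is correct and matches the paper's own argument, which likewise treats the result as a direct componentwise consequence of Theorem~\ref{theo:charact} applied separately to the DoC-MFs $\underline{A}$ and $\overline{A}$. If anything, your write-up is slightly more careful than the paper's one-line proof, which additionally cites Theorem~\ref{theo:Sumprodt2} even though, as your argument shows, that theorem is not actually needed for this statement.
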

\begin{proof}
    Under the described setup, the result is a direct consequence of Theorems \ref{theo:charact} and \ref{theo:Sumprodt2}
\end{proof}
With this characterization, it is possible to simplify the computations of additions and products by scalars of DoC-IT2MFs.

 \begin{cor}[Addition of DoC-IT2MFs]\label{cor:sum-T2}
        Let $A=(\underline{A},\overline{A}),B=(\underline{B},\overline{B}):\R\to\ci$ be two DoC-IT2MFs and consider a finite set $L$ of $\alpha$-levels such that $\mathcal{L}^{\underline{A}}, \mathcal{L}^{\overline{A}},\mathcal{L}^{\underline{B}},\mathcal{L}^{\overline{B}}\subseteq L$. Then, the mapping $(A+B)^L=(\underline{(A+B)^L},\overline{(A+B)^L}):\R\to\ci$ defined by  
        \begin{gather*}
            \underline{(A+B)^L}=(\underline{A}+\underline{B})^L,\\
            \overline{(A+B)^L}=(\overline{A}+\overline{B})^L,
        \end{gather*}
        satisfy   $(A+B)^L(x)=(A\oplus B)(x)\pt x\in\R$.
    \end{cor}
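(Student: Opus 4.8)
The plan is to reduce the claim to the type-1 case by exploiting the componentwise structure of addition of IT2FNs established in Theorem \ref{theo:Sumprodt2}. Since $A=(\underline{A},\overline{A})$ and $B=(\underline{B},\overline{B})$ are DoC-IT2MFs, all four components $\underline{A},\overline{A},\underline{B},\overline{B}$ are DoC-MFs, and by Theorem \ref{theo:Sumprodt2} the sum decomposes as $A\oplus B=(\underline{A}\oplus\underline{B},\overline{A}\oplus\overline{B})$. This reduces the problem to verifying the desired identity on each component separately.

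First I would check that the hypotheses of Corollary \ref{cor:sum-pwlfn} are met for each component. The requirement $\mathcal{L}^{\underline{A}}\cup\mathcal{L}^{\underline{B}}\subseteq L$ (respectively $\mathcal{L}^{\overline{A}}\cup\mathcal{L}^{\overline{B}}\subseteq L$) follows immediately from the containment hypothesis $\mathcal{L}^{\underline{A}},\mathcal{L}^{\overline{A}},\mathcal{L}^{\underline{B}},\mathcal{L}^{\overline{B}}\subseteq L$ in the corollary statement. Applying Corollary \ref{cor:sum-pwlfn} to the pair $(\underline{A},\underline{B})$ then yields $(\underline{A}+\underline{B})^L(x)=(\underline{A}\oplus\underline{B})(x)$ for all $x\in\R$, and applying it to $(\overline{A},\overline{B})$ yields $(\overline{A}+\overline{B})^L(x)=(\overline{A}\oplus\overline{B})(x)$ for all $x\in\R$.

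Combining these two identities with the definition $(A+B)^L=\big((\underline{A}+\underline{B})^L,(\overline{A}+\overline{B})^L\big)$ gives, for every $x\in\R$,
\begin{equation*}
(A+B)^L(x)=\big((\underline{A}\oplus\underline{B})(x),(\overline{A}\oplus\overline{B})(x)\big)=(A\oplus B)(x),
\end{equation*}
where the last equality is precisely the componentwise form of $A\oplus B$ supplied by Theorem \ref{theo:Sumprodt2}. In particular, since $A\oplus B$ is an IT2FN by that same theorem, its values lie in $\ci$, so the map $(A+B)^L$ is automatically $\ci$-valued and the stated equality holds.

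I do not expect a substantial obstacle here: the argument is a direct lift of the type-1 addition formula through the componentwise decomposition of IT2FN arithmetic. The only points requiring attention are bookkeeping ones — confirming that the single shared level set $L$ simultaneously contains the required breakpoints of all four component DoC-MFs, which is exactly what the fourfold hypothesis on $L$ guarantees, and observing that the ordering $\underline{(A+B)^L}\leqslant\overline{(A+B)^L}$ need not be argued separately, as it is inherited from the fact that $A\oplus B$ is already an IT2FN.
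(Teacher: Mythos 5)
Your proposal is correct and follows essentially the same route as the paper, which likewise disposes of the corollary by reducing to the componentwise type-1 case via Corollary \ref{cor:sum-pwlfn}. The only cosmetic difference is that the paper cites Theorem \ref{theo:charcT2} where you invoke Theorem \ref{theo:Sumprodt2} directly, but since Theorem \ref{theo:charcT2} is itself derived from Theorem \ref{theo:Sumprodt2}, the underlying argument --- componentwise decomposition of $A\oplus B$ followed by the type-1 addition formula --- is identical, and your explicit checks of the hypotheses and of $\ci$-valuedness only make the paper's one-line justification fully rigorous.
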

    \begin{proof}
        The result is a consequence of Theorem \ref{theo:charcT2} and Corollary \ref{cor:sum-pwlfn}.
    \end{proof}
      \begin{cor}[Product by scalars for DoC-IT2MFs]\label{cor:prod_T2}
        Let $A=(\underline{A},\overline{A}):\R\to\ci$ be a  DoC-MF and consider $r>0$ and a finite set $L$ of $\alpha$-levels such that $\mathcal{L}^{\underline{A}},\mathcal{L}^{\overline{A}}\subseteq L$. Then, the mapping $(rA)^L=(\underline{(rA)^L},\overline{(rA)^L}):\R\to\ci$ defined by
        \begin{gather*}            \underline{(rA)^L}=(r\underline{A})^L,\\
            \overline{(rA)^L}=(r\overline{A})^L,
        \end{gather*}
        satisfies $(rA)^L(x)=(r\odot A)(x)\pt x\in\R$.
    \end{cor}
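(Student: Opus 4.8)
The plan is to mirror the argument used for the addition of DoC-IT2MFs in Corollary \ref{cor:sum-T2}, reducing the scalar product to its componentwise action and then invoking the Type-1 result. The two structural facts I would lean on are Theorem \ref{theo:Sumprodt2}, which guarantees that $r\odot A$ splits as $(r\odot\underline{A},r\odot\overline{A})$, and Corollary \ref{cor:prod-pwlfn}, which identifies the DoC-MF formula $(r\underline{A})^L$ with the genuine scalar product $r\odot\underline{A}$ (and likewise for $\overline{A}$).

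First I would apply Theorem \ref{theo:Sumprodt2} to write, for every $x\in\R$,
\begin{equation*}
(r\odot A)(x)=\bigl((r\odot\underline{A})(x),\,(r\odot\overline{A})(x)\bigr).
\end{equation*}
Next, since $A$ is a DoC-IT2MF, both $\underline{A}$ and $\overline{A}$ are DoC-MFs, and the hypothesis $\mathcal{L}^{\underline{A}},\mathcal{L}^{\overline{A}}\subseteq L$ supplies exactly the $\alpha$-level condition required by Corollary \ref{cor:prod-pwlfn}. Applying that corollary separately to each component yields $(r\underline{A})^L(x)=(r\odot\underline{A})(x)$ and $(r\overline{A})^L(x)=(r\odot\overline{A})(x)$ for all $x\in\R$. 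Substituting these into the displayed decomposition and recalling the definitions $\underline{(rA)^L}=(r\underline{A})^L$ and $\overline{(rA)^L}=(r\overline{A})^L$ gives $(rA)^L(x)=(r\odot A)(x)$, as claimed; invoking Theorem \ref{theo:charcT2} additionally certifies that the pair $((r\underline{A})^L,(r\overline{A})^L)$ is a bona fide DoC-IT2MF, so the left-hand side is well posed as an element of $\cd_2$.

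I do not expect a genuine obstacle here; the only point deserving care is checking that the hypotheses of the Type-1 corollary transfer to each component. Concretely, one must confirm that the set of $\alpha$-levels governing $(r\underline{A})^L$ is unchanged by positive scaling — that is, $\mathcal{L}^{r\odot\underline{A}}=\mathcal{L}^{\underline{A}}$ — so that the same finite set $L$ continues to satisfy the containment hypothesis. This holds because multiplying by $r>0$ only rescales the domain and leaves every attained membership degree fixed, hence leaves $\mathcal{L}$ invariant; the analogous statement for $\overline{A}$ is identical.
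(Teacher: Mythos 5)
Your proposal is correct and follows exactly the paper's route: the paper's own proof is a one-line appeal to Theorem \ref{theo:charcT2} and Corollary \ref{cor:prod-pwlfn}, which is precisely the componentwise reduction you carry out (with Theorem \ref{theo:Sumprodt2} supplying the decomposition $r\odot A=(r\odot\underline{A},r\odot\overline{A})$). Your closing check that $\mathcal{L}^{r\odot\underline{A}}=\mathcal{L}^{\underline{A}}$ under positive scaling is a sound extra detail the paper leaves implicit, not a deviation.
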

    \begin{proof}
        This statement is consequence of Theorem \ref{theo:charcT2} and Corollary \ref{cor:prod-pwlfn}.
    \end{proof}
   \begin{cor}[Weighted Average DoC-IT2MFs]\label{corol:WA_T2}
       Let $A^1,...,A^n:\R\to\ci$ be $n\in\N$ DoC-IT2MFs such that $\sup(A^j)\subseteq\io\pt j=1,...,n$ and consider a finite set $L$ of $\alpha$-levels such that  $\mathcal{L}^{\underline{A}^{j}},\mathcal{L}^{\overline{A}^{j}}\subseteq L \pt j=1,...,n $. Additionally, let us consider a weight vector $w=(w_1,...,w_n)\in\io$ such that $\sum_{j=1}^nw_j=1$. Then, the mapping $\Phi_w(A^1,...,A^n)^L:\R\to\ci$ defined by
       \begin{gather*}
           \underline{\Phi_w(A^1,...,A^n)^L}=\Phi_w(\underline{A}^1,...,\underline{A}^n)^L,\\ \overline{\Phi_w(A^1,...,A^n)^L}=\Phi_w(\overline{A}^1,...,\overline{A}^n)^L,
       \end{gather*}
    
     satisfies $\Phi_w(A^1,...,A^n)^L(x)=\bigoplus\limits_{j=1}^n(w_j\odot A^j)(x)\pt x\in\R$ and $\sup(\Phi_w(A^1,...,A^n)^L)\subseteq\io$.
   \end{cor}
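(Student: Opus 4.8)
The plan is to follow the same reduction strategy employed in the proofs of Corollaries~\ref{cor:sum-T2} and~\ref{cor:prod_T2}: since every DoC-IT2MF is the pairing of two DoC-MFs and all the operations involved act coordinate-wise, I would derive the interval type-2 identity from the type-1 result (Corollary~\ref{corol:WA}) applied separately to the lower and upper membership functions, combined with the characterization in Theorem~\ref{theo:charcT2}.

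First I would use Theorem~\ref{theo:Sumprodt2} to unpack the target expression. That theorem shows that for IT2FNs both $\lambda\odot A=(\lambda\odot\underline{A},\lambda\odot\overline{A})$ and $A\oplus B=(\underline{A}\oplus\underline{B},\overline{A}\oplus\overline{B})$ act componentwise, and that each such operation again produces an IT2FN; a straightforward induction on $n$ therefore gives
\[
\bigoplus_{j=1}^n(w_j\odot A^j)=\left(\bigoplus_{j=1}^n(w_j\odot\underline{A}^j),\;\bigoplus_{j=1}^n(w_j\odot\overline{A}^j)\right).
\]
This reduces the problem to two ordinary fuzzy weighted averages of DoC-MFs. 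Next I would apply Corollary~\ref{corol:WA} to the family $\underline{A}^1,\dots,\underline{A}^n$ and again to $\overline{A}^1,\dots,\overline{A}^n$; the hypotheses $\mathcal{L}^{\underline{A}^j},\mathcal{L}^{\overline{A}^j}\subseteq L$ together with $\sup(A^j)\subseteq\io$ (which forces both $\sup(\underline{A}^j),\sup(\overline{A}^j)\subseteq\io$ since $\underline{A}^j\leqslant\overline{A}^j$) guarantee its applicability. This yields $\Phi_w(\underline{A}^1,\dots,\underline{A}^n)^L(x)=\bigoplus_{j=1}^n(w_j\odot\underline{A}^j)(x)$ and the analogous identity for the upper components at every $x\in\R$, together with the two support inclusions in $\io$. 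Substituting into the coordinate definition of $\Phi_w(A^1,\dots,A^n)^L$ then delivers both claimed conclusions.

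The step I expect to require the most care is confirming that the constructed pair actually lands in $\ci$, that is, that $\underline{\Phi_w(A^1,\dots,A^n)^L}(x)\leqslant\overline{\Phi_w(A^1,\dots,A^n)^L}(x)$ for all $x$, so that the output is a genuine DoC-IT2MF and the weighted average is well defined as a type-2 object rather than merely a pair of DoC-MFs. I would settle this via the monotonicity of the sup--min operations already exploited in the proof of Theorem~\ref{theo:Sumprodt2}: from $\underline{A}^j(x)\leqslant\overline{A}^j(x)$ for each $j$ one propagates the inequality through each scalar product and each fuzzy sum, obtaining $\bigoplus_{j=1}^n(w_j\odot\underline{A}^j)\leqslant\bigoplus_{j=1}^n(w_j\odot\overline{A}^j)$ pointwise. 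Once this ordering is secured, the equality of membership values and the support condition follow mechanically from the cited results, so the argument condenses into essentially the same short citation pattern as the preceding two corollaries.
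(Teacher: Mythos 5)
Your proposal is correct and takes essentially the same route as the paper, whose entire proof is a citation of Theorem~\ref{theo:charcT2} and Corollary~\ref{corol:WA}: the coordinate-wise reduction you carry out (via Theorem~\ref{theo:Sumprodt2} plus induction, then applying Corollary~\ref{corol:WA} to the lower and upper families) is exactly what that citation compresses. Your explicit verification that the resulting pair lands in $\ci$ (pointwise $\underline{\Phi}\leqslant\overline{\Phi}$ by monotonicity of the sup--min operations) fills in a detail the paper leaves implicit, but it is a completion of the same argument rather than a different approach.
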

   \begin{proof}
       In this case, the result follows from Theorem \ref{theo:charcT2} and Corollary \ref{corol:WA}.
   \end{proof}

\begin{theorem}
    Let $\leqslant^\cf$ be an admissible order for fuzzy numbers. Then, the following binary relations are total orders on $\cff$:
    \begin{equation*}
        \begin{split}
            A\leqslant^\cff_1 B\iff \begin{cases}
            \underline{A}<^\cf \underline{B}\\
            \text{
            or
            }\\
            \underline{A}= \underline{B}\text{ and } \overline{A}\leqslant^\cf \overline{B}
        \end{cases}\\
        A\leqslant^\cff_2 B\iff \begin{cases}
            \overline{A}<^\cf \overline{B}\\
            \text{
            or
            }\\
            \overline{A}= \overline{B}\text{ and } \underline{A}\leqslant^\cf \underline{B}
        \end{cases}
        \end{split}
    \end{equation*}
    In addition, both of them are admissible in the sense that for any $A, B\in\cff$ such that $\underline{A}\leqslant^\cf_0 \underline{B}$ and $\overline{A}\leqslant^\cf_0 \overline{B}$, $ A\leqslant^\cff_1 B$ and $ A\leqslant^\cff_2 B$ hold.
\end{theorem}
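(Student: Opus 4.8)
The plan is to recognize both relations as lexicographic refinements of the admissible order $\leqslant^\cf$ applied to the two components $\underline{\cdot}$ and $\overline{\cdot}$, and to verify the four defining properties of a total order (reflexivity, antisymmetry, transitivity, totality). I would treat $\leqslant^\cff_1$ in full and observe that $\leqslant^\cff_2$ follows by the symmetric argument exchanging the roles of the lower and upper fuzzy numbers. Throughout I write $<^\cf$ for the strict part of $\leqslant^\cf$, i.e. $X<^\cf Y \iff X\leqslant^\cf Y$ and $X\neq Y$, and I use that $\leqslant^\cf$, being a total order, makes $<^\cf$ asymmetric and transitive, with exactly one of $X<^\cf Y$, $X=Y$, $Y<^\cf X$ holding for each pair.

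Reflexivity of $\leqslant^\cff_1$ is immediate: for any $A$ the second branch applies, since $\underline{A}=\underline{A}$ and $\overline{A}\leqslant^\cf\overline{A}$ by reflexivity of $\leqslant^\cf$. For antisymmetry I would assume $A\leqslant^\cff_1 B$ and $B\leqslant^\cff_1 A$; if $\underline{A}<^\cf\underline{B}$, asymmetry of $<^\cf$ rules out both branches of $B\leqslant^\cff_1 A$, a contradiction, so $\underline{A}=\underline{B}$, whence the second branch forces $\overline{A}\leqslant^\cf\overline{B}$ and $\overline{B}\leqslant^\cf\overline{A}$, and antisymmetry of $\leqslant^\cf$ yields $\overline{A}=\overline{B}$, i.e. $A=B$. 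For transitivity I would split into the four cases given by whether $\underline{A}<^\cf\underline{B}$ or $\underline{A}=\underline{B}$ and whether $\underline{B}<^\cf\underline{C}$ or $\underline{B}=\underline{C}$: whenever a strict inequality appears among the lower parts, transitivity of $<^\cf$ together with substitution of equal components gives $\underline{A}<^\cf\underline{C}$ and hence $A\leqslant^\cff_1 C$, while in the remaining case $\underline{A}=\underline{B}=\underline{C}$ and transitivity of $\leqslant^\cf$ on the upper parts gives $\overline{A}\leqslant^\cf\overline{C}$. Finally, totality follows from trichotomy: comparing $\underline{A}$ and $\underline{B}$ under the total order $\leqslant^\cf$ decides the relation unless $\underline{A}=\underline{B}$, in which case totality of $\leqslant^\cf$ on $\overline{A},\overline{B}$ settles it.

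For the admissibility claim I would assume $\underline{A}\leqslant^\cf_0\underline{B}$ and $\overline{A}\leqslant^\cf_0\overline{B}$. Since $\leqslant^\cf$ is admissible it refines $\leqslant^\cf_0$, so $\underline{A}\leqslant^\cf\underline{B}$ and $\overline{A}\leqslant^\cf\overline{B}$. Reading $\underline{A}\leqslant^\cf\underline{B}$ as either $\underline{A}<^\cf\underline{B}$ or $\underline{A}=\underline{B}$, the first branch of $\leqslant^\cff_1$ handles the strict case, and when $\underline{A}=\underline{B}$ the already-established $\overline{A}\leqslant^\cf\overline{B}$ triggers the second branch; thus $A\leqslant^\cff_1 B$, and the symmetric argument gives $A\leqslant^\cff_2 B$.

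I do not anticipate a genuine obstacle, as the whole verification is routine once the two relations are read as lexicographic orders. The only points demanding care are the consistent bookkeeping of the strict part $<^\cf$ — its asymmetry in the antisymmetry step and its transitivity, combined with substitution of equal components, in the transitivity step — and the explicit appeal in the admissibility step to the defining refinement property of an admissible order, namely that $X\leqslant^\cf_0 Y$ implies $X\leqslant^\cf Y$.
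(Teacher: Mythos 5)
Your proposal is correct and follows essentially the same route as the paper: a direct verification that $\leqslant^\cff_1$ is a lexicographic total order (reflexivity, antisymmetry and transitivity by the same four-case analysis, totality from totality of $\leqslant^\cf$), with $\leqslant^\cff_2$ handled by symmetry. If anything, you spell out two points the paper leaves as ``straightforward'' --- the trichotomy argument for totality and the explicit appeal to the refinement property $X\leqslant^\cf_0 Y \Rightarrow X\leqslant^\cf Y$ in the admissibility step --- so no gap remains.
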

\begin{proof}
    Let us consider the binary relation $\leqslant^\cff_1$. Clearly, $\leqslant^\cff_1$ is reflexive, i.e., $A \leqslant^\cff_1 B\pt A\in\cff$. Assume that $A,B\in\cff$ are such that $A\leqslant^\cff_1 B$ and $B \leqslant^\cff_1 A$ to analyze the following cases:
    (i) $\underline{A}<^\cf\underline{B}$ and $\underline{B}<^\cf\underline{A}$, (ii) $\underline{A}<^\cf\underline{B}$ and $\underline{A}=\underline{B}$, $\overline{B}\leqslant^\cf \overline{A}$, (iii) $\underline{A}=\underline{B}$,  $\overline{A}\leqslant^\cf \overline{B}$ and $\underline{B}<^\cf\underline{A}$, and (iv) $\underline{A}=\underline{B}$, $\overline{A}\leqslant^\cf \overline{B}$ and $\overline{B}\leqslant^\cf \overline{A}$. From these scenarios, only the last one may hold under $A\leqslant^\cff_1 B$ and $B \leqslant^\cff_1 A$. In such case, since  $\overline{A}\leqslant^\cf \overline{B}$ and $\overline{B}\leqslant^\cf \overline{A}$, the antisymmetry of $\leqslant^\cf$ implies $\overline{A}=\overline{B}$ and thus $A=B$, which proves the antisymmetry of $\leqslant^\cff_1$. Now, consider $A\leqslant^\cff_1 B$ and $B\leqslant^\cff_1 C$. Again, we have four cases:
    \begin{itemize}[label={--}]
        \item $\underline{A}<^\cf\underline{B}$ and $\underline{B}<^\cf\underline{C}$, which implies $\underline{A}<^\cf\underline{C}$,
        \item $\underline{A}<^\cf\underline{B}$ and $\underline{B}=\underline{C}$, $\overline{B}<^\cf\overline{C}$, which implies $\underline{A}<^\cf\underline{C}$,
        \item $\underline{A}=\underline{B}, \overline{A}\leqslant^\cf\overline{B}$ and $\underline{B}\leqslant^\cf\underline{C}$, which implies $\underline{A}\leqslant^\cf\underline{C}$,
        \item $\underline{A}=\underline{B}, \overline{A}\leqslant^\cf\overline{B}$, $\underline{B}=\underline{C}, \overline{B}\leqslant^\cf\overline{C}$, which implies $\underline{A}=\underline{C}$ and $\overline{A}\leqslant^\cf\overline{C}$.
    \end{itemize}
    In any case, $A\leqslant^\cff_1 C$, which is the transitivity of $\leqslant^\cff_1$. Obviously, $\leqslant^\cff_1$ is a total binary relation (because $\leqslant^\cf$ is total) and the admissibility is straight forward from the admissibility of $\leqslant^\cf.$
\end{proof}

\section{Conclusion}\label{sec: Conclusions}
\noindent In this study, we have presented a methodology for constructing IT2MFs model for linguistic terms associated with a given linguistic scale by adopting a co-constructive approach where decision analysts interact with DM to elicit his/her subjective judgments. In the first step, by integrating the idea of building membership function from the pairwise comparisons in ratio scale with deck-of-cards, our approach (DoC-T1MF) enhances the interpretability and reliability of fuzzy set-based linguistic models to deal with complexity and uncertainty inherent in the decision-making process. In the second step, the DoC-IT2MF methodology focuses on capturing interpersonal uncertainty (hesitation) of the DM through an intuitive mechanism of imprecise allocation of blank cards to build IT2-MF for a set of T1MF. This facilitates the easy and meaningful interpretation of the embedded T1MF and uncertainty on membership degree allocation. In the end, it allows us to build an effective and accurate personalized linguistic scale for the decision-maker.

Finally, a theoretical framework has been developed to manage such types of information in decision-making, particularly the MCDM framework based on additive utility theory by defining appropriate aggregation rules and ordering. This framework will allow us to apply the proposed methodology in solving real-world decision-making, especially involving experts’ subjective judgments to make more informed, acceptable and effective decisions. 

Despite its advantages, the proposed methodology presents several avenues for future research. First, extending the approach to higher-order fuzzy models, such as General Type-2 Fuzzy Sets (GT2FSs), could further enhance the handling of uncertainty and vagueness. Second, investigating alternative preference elicitation techniques that incorporate active learning mechanisms could refine the construction process and reduce the cognitive burden. Lastly, applying the proposed framework to diverse real-life decision contexts is essential to validate and refine the construction methodology including the effective questioning protocols.

In conclusion, the proposed socio-technical co-constructive approach represents a significant step toward improving the modelling of linguistic uncertainty in the expert-driven decision-making process. Further advancements in this direction will have the potential to improve decision support systems for expert-driven decision-making under the computing with word paradigm.

\section*{Acknowledgements}
\addcontentsline{toc}{section}{\numberline{}Acknowledgements}
\noindent José Rui Figueira acknowledges the support by national funds through FCT (Fundação para a Ciência e a Tecnologia), Portugal under the project UIDB/00097/2020.  Bapi Dutta acknowledges the support of the Spanish Ministry of Economy and Competitiveness through the Ramón y Cajal Research grant (RYC2023-045020-I), Spain.  

%\noindent Bapi Dutta is funded by the Grants for the Requalification of the Spanish University System for 2021-2023 in the Mar{\'i}a Zambrano modality (UJA13MZ). Jos\'e Rui Figueira has been supported by Portuguese national funds through the FCT - Foundation for Science and Technology, I.P., grant number UIDB/00097/2020.

\vfill\newpage
%%\renewcommand{\thesection}{\Alph{section}}
%%\setcounter{section}{0}
%%\section{Appendix: ????}\label{sec:appendix}
\section*{Appendix}\label{sec:appendix}
\addcontentsline{toc}{section}{\numberline{}Appendix}
\begin{theorem}\label{th: num_to_Card}
    Consider an $n+1$-tuple $x\in[0,1]^{n+1}$ ($n\in\N, n>1$) satisfying $0=x_0<x_1<...<x_n=1$. Let $m\in\N$ be an integer such that $\operatorname{floor}(10^mx_{i-1})<\operatorname{floor}(10^mx_i)\pt i=1,...,n$. Then, it is possible to use the DoC method with $10^m$ cards to approximate the $n+1$-tuple $x$ with precision $10^{-m}$.
\end{theorem}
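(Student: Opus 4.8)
The plan is to give a direct constructive proof: I will exhibit an explicit card configuration whose total budget is $10^m$ units and show that the DoC value it produces at each $x_i$ differs from $x_i$ by less than $10^{-m}$. The whole argument reduces to matching the DoC normalization to a floor-based discretization of $\io$, so it is a verification rather than a deep estimate.

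First I would fix the total number of card-units to be exactly $r = 10^m$, so that by the value-function formula the value of a single blank card is $\gamma = 1/r = 10^{-m}$; this is the quantity that will govern the precision. Next, for each $i = 0,1,\dots,n$ I set $s_i = \operatorname{floor}(10^m x_i)$. The endpoints behave as required, since $s_0 = \operatorname{floor}(0) = 0$ and $s_n = \operatorname{floor}(10^m) = 10^m$. The crucial observation is that the hypothesis $\operatorname{floor}(10^m x_{i-1}) < \operatorname{floor}(10^m x_i)$ states precisely that $s_{i-1} < s_i$, so each increment $s_i - s_{i-1}$ is a positive integer.

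I then define the number of blank cards placed in the gap between $x_{i-1}$ and $x_i$ to be $e_i = s_i - s_{i-1} - 1$ for $i = 1,\dots,n$. By the previous step each $e_i$ is a well-defined nonnegative integer, so this is a legitimate DoC chain. Two checks close the argument. The budget is consistent, since $\sum_{i=1}^{n}(e_i+1) = \sum_{i=1}^{n}(s_i - s_{i-1}) = s_n - s_0 = 10^m = r$, so the configuration indeed uses $10^m$ card-units and realizes $\gamma = 10^{-m}$. The value produced by the DoC method at $x_j$ is then the normalized partial sum
\[
\hat{x}_j = \frac{1}{r}\sum_{h=1}^{j}(e_h+1) = \frac{s_j}{10^m} = \frac{\operatorname{floor}(10^m x_j)}{10^m},
\]
and since the fractional part of $10^m x_j$ lies in $[0,1)$ we obtain
\[
|x_j - \hat{x}_j| = \frac{10^m x_j - \operatorname{floor}(10^m x_j)}{10^m} < 10^{-m},
\]
which is the claimed precision (with equality to $0$ at the endpoints).

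There is no serious analytic obstacle; the proof is essentially bookkeeping. The one point I would highlight is the recognition that the floor-separation hypothesis is exactly the condition needed to guarantee $e_i \geqslant 0$, i.e. that every bucket receives a nonnegative card count, so without it the construction would fail. I would also make explicit that fixing $r = 10^m$ is what ties the card value $\gamma$ to the discretization mesh $10^{-m}$, and that choosing $s_i = \operatorname{floor}(10^m x_i)$ (rather than rounding) keeps the budget telescoping exactly onto $10^m$, which is the only mildly delicate part of the verification.
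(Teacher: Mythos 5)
Your proof is correct and takes essentially the same route as the paper: your increments $e_i+1=\operatorname{floor}(10^m x_i)-\operatorname{floor}(10^m x_{i-1})$ are exactly the paper's card counts $c_i$, obtained there via the same choice $N=10^m$ and the same telescoping recursion on the floor-based discretization. The only (welcome) addition is that you state the final estimate $\vert x_j-\hat{x}_j\vert<10^{-m}$ explicitly, which the paper leaves implicit in calling $r_i$ a rational approximation.
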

\begin{proof}
       Let us define a rational approximation of the $n+1$-tuple $x$ as $r_i=\frac{\operatorname{floor}(10^mx_i)}{10^m}$ $\pt i=0,...,n$. Subsequently, consider the number of cards $c_i$ to place between the consecutive values $r_{i-1}$ and $r_{i}$ for $i=1,...,n$ and denote the total number of cards as $N=\sum_{j=1}^nc_j$. Note that, if the DoC method can be applied, it is necessary that 
    \begin{equation*}
        r_i=\frac{1}{N}\sum_{j=1}^{i}c_j \iff  r_i{N}=\sum_{j=1}^{i}c_j.
    \end{equation*}
    Therefore, if we take $N=10^m$, the quantities $r_iN$ are integers for $i=0,...,n$ that satisfy $r_{i-1}N<r_iN\pt i=1,...,n$. In such case, all the values $c_i$ are integers greater than one and they can be computed recursively as
    \begin{equation*}
        \begin{split}
            c_1&=10^mr_1=\operatorname{floor}(10^mx_1)\\
            c_i&=10^mr_i-\sum_{j=1}^{i-1}c_j=\operatorname{floor}(10^mx_i)-\sum_{j=1}^{i-1}c_j, i>1.
        \end{split}
    \end{equation*}
    %Of note, the thus-computed $c_i$ are not necessarily greater than zero, which is usual in the DoC method. If that is the case, increasing the value of $m$ until $\operatorname{floor}(10^mx_{i-1})<\operatorname{floor}(10^mx_i)\pt i=1,...,n$ guarantees that $c_i\geq 1\pt i=1,...,n$. 
\end{proof}

\vfill\newpage

\addcontentsline{toc}{section}{\numberline{}References}
\bibliographystyle{model2-names}
\bibliography{manuscript}

%% Authors are advised to submit their bibtex database files. They are
%% requested to list a bibtex style file in the manuscript if they do
%% not want to use model2-names.bst.

%% References without bibTeX database:

% \begin{thebibliography}{00}

%% \bibitem must have one of the following forms:
%%   \bibitem[Jones et al.(1990)]{key}...
%%   \bibitem[Jones et al.(1990)Jones, Baker, and Williams]{key}...
%%   \bibitem[Jones et al., 1990]{key}...
%%   \bibitem[\protect\citeauthoryear{Jones, Baker, and Williams}{Jones
%%       et al.}{1990}]{key}...
%%   \bibitem[\protect\citeauthoryear{Jones et al.}{1990}]{key}...
%%   \bibitem[\protect\astroncite{Jones et al.}{1990}]{key}...
%%   \bibitem[\protect\citename{Jones et al., }1990]{key}...
%%   \harvarditem[Jones et al.]{Jones, Baker, and Williams}{1990}{key}...
%%

% \bibitem[ ()]{}

% \end{thebibliography}

\end{document}